\DeclareFontFamily{U}{mathb}{\hyphenchar\font45}
\DeclareFontShape{U}{mathb}{m}{n}{
	<-6> mathb5 <6-7> mathb6 <7-8> mathb7
	<8-9> mathb8 <9-10> mathb9
	<10-12> mathb10 <12-> mathb12
}{}
\DeclareSymbolFont{mathbA}{U}{mathb}{m}{it}
\DeclareMathSymbol{\llcurly}{\mathrel}{mathbA}{"CE}
\DeclareMathSymbol{\ggcurly}{\mathrel}{mathbA}{"CF}
\DeclareMathSymbol{\square}{\mathrel}{mathbA}{5}
\DeclareMathSymbol{\centerdot}{\mathrel}{mathbA}{13}
\newcommand{\change}{\ensuremath{\circ}}
\def\blfootnote{\gdef\@thefnmark{}\@footnotetext}
\renewcommand{\beliefsOf}[1]{\ensuremath{\text{Bel}\left(#1\right)}}
\def\myorcidID#1{\ifx\hyper@anchor\@undefined\unskip$^{[#1]}$\else\href{#1}{\protect\includegraphics[height=8px]{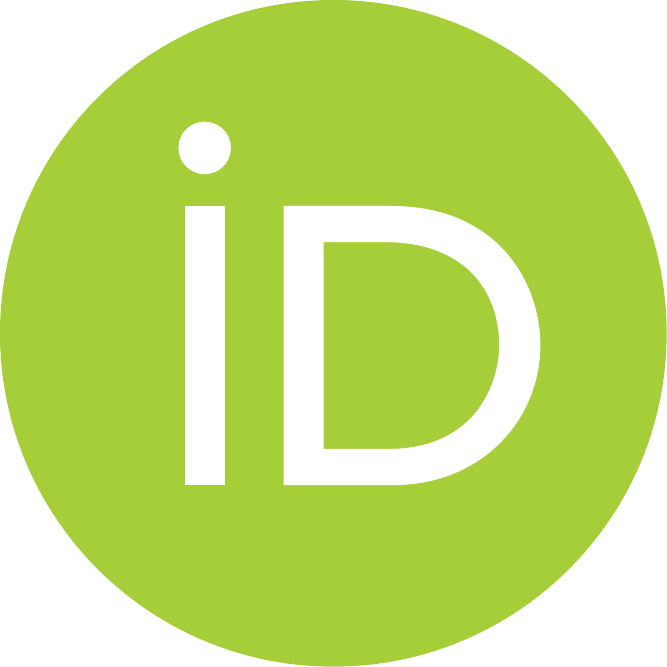}}\fi}
\renewcommand{\headword}[1]{\emph{#1}}
\newcommand{\hesitant}{hesitant}
\newcommand{\decrement}{decrement}
\newcommand{\improve}{\ensuremath{\square}}
\newcommand{\improveLimit}{\ensuremath{\centerdot}}
\newcommand{\stepwise}{\mathrel{\circ}}
\newcommand{\belated}{\mathrel{\circ}}
\begin{document}
%
\title{Decrement Operators in Belief Change}

\author{Kai Sauerwald
	\href{https://orcid.org/0000-0002-1551-7016}{\protect\includegraphics[height=8px]{orcid_logo.pdf}}%
		\and Christoph Beierle
}
\authorrunning{Sauerwald and Beierle}
\institute{FernUniversit\"{a}t in Hagen, 58084 Hagen, Germany\\\email{\{kai.sauerwald,christoph.beierle\}@fernuni-hagen.de}
}

%
\maketitle

%
\begin{abstract}
While research on iterated revision is predominant in the field of iterated belief change, the class of iterated contraction operators
received more attention in recent years.
In this article, we examine a non-prioritized generalisation of iterated contraction. 
In particular,  the class of weak \decrement\ operators is introduced, 
which are operators that by multiple steps achieve the same as a contraction.
Inspired by Darwiche and Pearl's work on iterated revision the subclass of \decrement\ operators is defined. 
For both, decrement and weak decrement operators, postulates are presented and for each of them a representation theorem in the framework of total preorders is given.
Furthermore, we present two sub-types of \decrement\ operators.

\keywords{belief revision, belief contraction, non-prioritized change, gradual change, forgetting, decrement operator}
\end{abstract}

\section{Introduction}
\label{sec:introduction}
Changing beliefs in a rational way in the light of new information is one of the core abilities of an agent - and thus one of the main concerns of artificial intelligence. 
The established AGM theory \cite{KS_AlchourronGaerdenforsMakinson1985} deals with desirable properties of rational belief change. 
	The AGM approach provides properties for different types of belief changes. If new beliefs are incorporated into an agent's beliefs while maintaining consistency, this is called a revision. 
	Expansion adds a belief unquestioned to an agent's beliefs, and  contraction removes a belief from an agent's beliefs.
	Building upon the characterisations of these kinds of changes and the underlying principle of minimal change, the theory fanned out in different directions and sub-fields.

The field of iterated belief revision examines the properties of belief revision operators which, due to their nature, can be applied iteratively. 
In this sub-field, one of the most influential articles is the seminal paper \cite{KS_DarwichePearl1997} by Darwiche and Pearl (DP), establishing the insight that belief sets are not a sufficient representation for iterated belief revision.
An agent has to encode more information about her belief change strategy into her \emph{epistemic state} - where the revision strategy deeply corresponds with conditional beliefs.
This requires additional postulates that guarantee intended behaviour in forthcoming changes.
The common way of encoding, also established by Darwiche and Pearl \cite{KS_DarwichePearl1997}, is an extension of Katsuno and Mendelzon's characterisation of AGM revision in terms of plausibility orderings \cite{KS_KatsunoMendelzon1992}, where it is assumed that the epistemic states contain an order over worlds (or interpretations).

Similar work has been done in recent years for iterated contraction. 
Chopra, Ghose, Meyer and Wong \cite{KS_ChopraGhoseMeyerWong2008} contributed postulates for contraction on epistemic states. 
Caridroit, Konieczny and Marquis \cite{KS_CaridroitKoniecznyMarquis2015} provided postulates for contraction in propositional logic and a characterisation with plausibility orders in the style of Katsuno and Mendelzon.
By this characterisation, the main characteristic of a contraction  with $ \alpha $ is  that the worlds of the previous state remain plausible and that the most plausible counter-models of $ \alpha $ become plausible.

However, in the sub-field of non-prioritised belief change, or more specifically, in the field of gradual belief change much work remains to be done on contraction.
An important generalisation of iterated revision operators are the class of improvement operators by Konieczny and {Pino P{\'{e}}rez} \cite{KS_KoniecznyPinoPerez2008}, which achieve the state of an revision by multiple steps in a gradual way.
These kind of changes where intensively studied by Konieczny, {Pino P{\'{e}}rez}, Booth, Ferm{\'{e}} and Grespan \cite{KS_KoniecznyGrespanPinoPerez2010,KS_BoothFermeKoniecznyPerez2014}.
A counterpart of improvement operators for the case of contraction is missing. This article fills this gap.
We investigate the contraction analogon to  improvement operators, which we call decrement operators. 
The leading idea is to examine a class of operators which lead, after enough consecutive applications, to the same states as an (iterative) contraction would do.

The research presented in this paper is also motivated by the quest for a formalisation of forgetting operators within the field of knowledge representation and reasoning (KRR). 
		In a recent survey article by Eiter and Kern-Isberner \cite{EiterKernIsberner2019KIzeitschrift} the connection between contraction and forgetting of a belief is dealt with from a KRR point of view.
		Steps towards a general framework for kinds of forgetting in common-sense based belief management, revealing links to well-known KRR methods, are taken in \cite{BeierleKernIsbernerSauerwaldBockRagni2019KIzeitschrift}.
		However, for the fading out of rarely used beliefs that takes places in humans gradually over time, or
		for the change of routines, e.g. in established workflows, often requiring many iterations and the intentional forgetting of the previous routines,
		counterparts in the formal methods of KRR are missing.
		With our work on \decrement\ operators, we provide some basic building blocks that may prove
		useful for developing a formalisation of these 
			psychologically inspired 
			forgetting operations.

\noindent In summary, the main contributions of this paper are\footnote{This version of the paper contains the full proofs. 
}:
\vspace{-0.15cm}
\begin{itemize}
	\item Postulates for operators which allow one to perform contractions gradually.
	\item Representation theorems for these classes in the framework or epistemic states and total preorders.
	\item Define two special types of \decrement\  operators.
\end{itemize}%
\vspace{-0.15cm}
The rest of the paper is organised as follows.
Section \ref{sec:prelim} briefly presents the required background on belief change.
Section \ref{sec:weak_disimprovements} introduces the main idea and the postulates along with a representation theorem for weak \decrement\ operators.
In Section \ref{sec:disimprovements} the weak \decrement\ operators are restricted by DP-like iteration postulates, leading to the class of \decrement\ operators;
we give also a representation theorem for the class of \decrement\ operators.
In  Section \ref{sec:stepwise_decrement_operator} two special types of \decrement\ operators are specified.
We close the paper with a discussion and point out future work in Section \ref{sec:conclusion}.

\section{Background}
\label{sec:prelim}

Let $ \Sigma $ be a propositional signature. 
The propositional language $ \propLang_\Sigma $ is the smallest set, such that $ a\in\propLang_\Sigma $ for every $ a\in\propLang_\Sigma $ and $ \neg \alpha\in\propLang_\Sigma $, $ \alpha\land \beta,\alpha\lor \beta\in\propLang_\Sigma $ if $ \alpha,\beta\in\propLang_\Sigma $. 
We omit often $ \Sigma $ and write $ \propLang $ instead of $ \propLang_\Sigma $.
We write formulas in $ \propLang $ with lower Greek letters $ \alpha,\beta,\gamma,\ldots $, and propositional variables with lower case letters $ a,b,c,\ldots\in\Sigma $.
 The set of 
propositional interpretations $ \Omega $, also called set of worlds, is identified with the set of corresponding complete conjunctions over $ \Sigma $.
Propositional entailment is denoted by $ \models $, with $ \modelsOf{\alpha} $ we denote the set of models of $ \alpha $, and $ Cn(\alpha)=\{ \beta\mid \alpha\models \beta \} $ is the deductive closure of $ \alpha $.
This is lifted to a set $ X $ by defining $ Cn(X)=\{ \beta \mid X\models\beta \} $.
	For two sets of formulas $ X,Y $ we say $ X $ is equivalent to $ Y $ with respect to the formula $ \alpha $, written $ X =_\alpha Y $, if $ Cn(X\cup\{\alpha\}) = Cn(Y\cup\{\alpha\}) $%
	\footnote{$ Cn(X\cup\{\alpha\}) $ matches 
			belief expansion with $ \alpha $ on belief sets. However, in the context here, the context of iterative changes, we understand this purely technically.
			The problem of expansion in this context is more complex \cite{KS_FermeWassermann2018}.}.
	For two sets of interpretations $ \Omega_1,\Omega_2\subseteq\Omega $ we say $ \Omega_1 $ is equivalent to $ \Omega_2 $ with respect to the formula $ \alpha $, written $ \Omega_1 =_\alpha \Omega_2 $, if $ \Omega_1 $ and $ \Omega_2 $ contain the same set of models of $ \alpha $, i.e. $ \{ \omega_1 \in \Omega_1 \mid \omega_1\models\alpha \} = \{ \omega_2 \in \Omega_2 \mid \omega_2\models\alpha \}  $.
For a set of worlds $ \Omega'\subseteq \Omega $ and  a total preorder $ \leq $ (reflexive and transitive relation) over $ \Omega $, we denote with $ \min(\Omega',\leq)=\{ \omega\mid  \omega\in\Omega' \text{ and } \forall \omega'\in\Omega'\ \omega\leq \omega' \} $ the set of all worlds in the lowest layer of $ \leq $ that are elements in $ \Omega' $.
For a total preorder $ \leq $, we denote with $ < $ its strict variant, i.e. $ x < y $ iff $ x \leq y $ and $ y \not\leq x $; with $ \ll $ the direct successor variant, i.e. $ x \ll y $ iff $ x < y $ and there is no $ z $ such that $ x < z < y $; and we write $ x \simeq y $ iff $ x \leq y $ and $ y\leq x $.

\subsection{Epistemic States and Belief Changes}
Every agent is equipped with an \headword{epistemic state}, sometimes also called belief state, that maintains all necessary information for her belief apparatus. With $ \setAllES $ we denote the set of all epistemic states.
Without defining what a epistemic state is, we assume that for every epistemic state $ \Psi\in\setAllES $ we can obtain the set of plausible sentences $ \beliefsOf{\Psi}\subseteq \mathcal{L} $ of $ \Psi $, which is deductively closed.
We write $ \Psi\models\alpha $ iff $ \alpha\in\beliefsOf{\Psi} $ and we define $ \modelsOf{\Psi}=\{ \omega \mid \omega\models \alpha \text{ for each } \alpha\in\beliefsOf{\Psi} \} $.
A 
belief change operator over $\mathcal{L}$ is a (left-associative) function $ \circ : \setAllES \times \propLang \to 
\setAllES $.
We
denote with $ \Psi \circ^n \alpha $ the n-times application of $ \alpha $ by $ \circ $ to $ \Psi $ \cite{KS_KoniecznyPinoPerez2008}.

Darwiche and Pearl \cite{KS_DarwichePearl1997} propose that an epistemic state $ \psi $ should be equipped with an ordering $ \leq_{\Psi} $ of the worlds (interpretations),
where the compatibility with $ \beliefsOf{\Psi} $ is ensured by the so-called faithfulness. 
Based on the work of Katsuno and Medelezon \cite{KS_KatsunoMendelzon1992}, a mapping $ \Psi \mapsto\, \leq_{\Psi} $ is called faithful assignment if the following is satisfied~\cite{KS_DarwichePearl1997}:
\begin{align*}
& \ksIF \omega_1 \in \modelsOf{\Psi} \ksAND \omega_2 \in \modelsOf{\Psi} \ksTHEN \omega_1 \simeq_\Psi \omega_2  \\
& \ksIF \omega_1 \in \modelsOf{\Psi} \ksAND \omega_2\notin\modelsOf{\Psi} \ksTHEN \omega_1 <_\Psi \omega_2
\end{align*}
Konieczny and Pino Pérez give a stronger variant of faithful assignments for iterated belief change \cite{KS_KoniecznyPinoPerez2008}, which ensures
that the mapping $ \Psi\mapsto\leq_{\Psi} $ is compatible with the belief change operator with respect to syntax independence.

\begin{definition}[Strong Faithful Assignment \cite{KS_KoniecznyPinoPerez2008}]
	Let $ \circ $ be a belief change operator. A function $ \Psi\mapsto \leq_\Psi $ that maps each epistemic state to a total preorder on interpretations is said to be a strong faithful assigment with respect to $ \circ $ if:
	\begin{align*}
	& \ksIF \omega_1 \in \modelsOf{\Psi} \ksAND \omega_2 \in \modelsOf{\Psi} \ksTHEN \omega_1 \simeq_\Psi \omega_2 \tag{SFA1} \label{pstl:SFA1} \\
	& \ksIF \omega_1 \in \modelsOf{\Psi} \ksAND \omega_2\notin\modelsOf{\Psi} \ksTHEN \omega_1 <_\Psi \omega_2 \tag{SFA2} \label{pstl:SFA2} \\
	& \ksIF \alpha_1\equiv\beta_1, \ldots, \alpha_n\equiv\beta_n \ksTHEN \leq_{\Psi\circ\alpha_1\circ\ldots\circ\alpha_n} = \leq_{\Psi\circ\beta_1\circ\ldots\circ\beta_n} \tag{SFA3} \label{pstl:SFA3} 
	\end{align*}
\end{definition}
We will make use of strong faithful assignments for the characterisation theorems.

\subsection{Iterated Contraction}

Postulates for AGM contraction in the framework of epistemic states were given by Chopra, Ghose, Meyer and  Wong \cite{KS_ChopraGhoseMeyerWong2008} and by Konieczny and Pino P{\'{e}}rez \cite{KS_KoniecznyPinoPerez2017}.
 We give here the formulation by Chropra et al. \cite{KS_ChopraGhoseMeyerWong2008}:
\begin{align*}
	 & \beliefsOf{\Psi - \alpha} \subseteq \beliefsOf{\Psi}  \tag{C1}\label{pstl:C-1}                                                      \\
& \ksIF  \alpha\notin\beliefsOf{\Psi} \ksTHEN \beliefsOf{\Psi}\subseteq \beliefsOf{\Psi -\alpha} \tag{C2}\label{pstl:C-2}                                                  \\
& \ksIF \alpha \not\equiv \top \ksTHEN  \alpha \notin \beliefsOf{\Psi  -  \alpha} \tag{C3}\label{pstl:C-3}                                            \\
&  \beliefsOf{\Psi} \subseteq Cn(\beliefsOf{\Psi  -  \alpha} \cup \alpha)  \tag{C4}\label{pstl:C-4}                                     \\
	 & \ksIF \alpha \equiv\beta \ksTHEN \beliefsOf{\Psi  -  \alpha} = \beliefsOf{\Psi  -  \beta} \tag{C5}\label{pstl:C-5}                                           \\
	 & \beliefsOf{\Psi - \alpha} \cap \beliefsOf{\Psi - \beta} \subseteq \beliefsOf{\Psi  -  (\alpha\land\beta)} \tag{C6}\label{pstl:C-6}                                        \\
	 & \ksIF \beta\notin\beliefsOf{\Psi  -  (\alpha\land\beta)}  \ksTHEN \beliefsOf{\Psi  -  (\alpha\land\beta)} \subseteq \beliefsOf{\Psi - \beta} \tag{C7}\label{pstl:C-7}
\end{align*}
For an explanation of these postulates we refer to the article of Caridroit et al. \cite{KS_CaridroitKoniecznyMarquis2015}.
A characterisation in terms of total preorders on epistemic states is given by the following proposition.
\begin{proposition}[AGM Contraction for Epistemic State {\cite{KS_KoniecznyPinoPerez2017}}]\label{prop:es_contraction}
A belief change operator $ - $ fulfils the postulates \eqref{pstl:C-1} to \eqref{pstl:C-7} if and only if there is a faithful assignment $ \Psi\mapsto \leq_\Psi $ such that:
\begin{equation}
\modelsOf{\Psi - \alpha} = \modelsOf{\Psi} \cup \min(\modelsOf{\negOf{\alpha}},\leq_{\Psi}) \label{eq:agmes_contraction}
\end{equation}
\end{proposition}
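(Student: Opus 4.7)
The plan is to prove both implications separately in the Katsuno--Mendelzon style, adapted to contraction. For the soundness direction (right-to-left), I would assume a faithful assignment $\Psi \mapsto \leq_\Psi$ satisfies \eqref{eq:agmes_contraction} and verify the postulates \eqref{pstl:C-1}--\eqref{pstl:C-7} one by one. Postulate \eqref{pstl:C-1} is immediate since $\modelsOf{\Psi} \subseteq \modelsOf{\Psi-\alpha}$, so the beliefs of $\Psi-\alpha$ are contained in those of $\Psi$. For \eqref{pstl:C-2}, if $\alpha \notin \beliefsOf{\Psi}$ then some world of $\modelsOf{\Psi}$ falsifies $\alpha$, and faithfulness forces $\min(\modelsOf{\negOf{\alpha}},\leq_\Psi) \subseteq \modelsOf{\Psi}$, so nothing new is added. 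Postulate \eqref{pstl:C-3} follows because $\alpha\not\equiv\top$ gives $\modelsOf{\negOf{\alpha}}\neq\emptyset$ and hence a nonempty minimum enters $\modelsOf{\Psi-\alpha}$. For \eqref{pstl:C-4} I would note that every world in $\modelsOf{\Psi-\alpha}$ satisfying $\alpha$ must belong to $\modelsOf{\Psi}$, since the adjoined set consists of counter-models of $\alpha$. Postulate \eqref{pstl:C-5} is immediate from syntax-independence of the model-set. For \eqref{pstl:C-6} and \eqref{pstl:C-7} I would use $\modelsOf{\neg(\alpha\land\beta)} = \modelsOf{\negOf{\alpha}}\cup\modelsOf{\negOf{\beta}}$ together with the fact that, under a total preorder, the minimum of a union is the union of whichever pair-minima lie at the common lowest layer.

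For the completeness direction (left-to-right), I would assume $-$ satisfies \eqref{pstl:C-1}--\eqref{pstl:C-7} and define, for each $\Psi$, a binary relation on worlds by
\[
\omega_1 \leq_\Psi \omega_2 \quad\text{iff}\quad \omega_1 \in \modelsOf{\Psi - \neg(\omega_1 \lor \omega_2)},
\]
identifying worlds with their complete conjunctions. The first step is checking faithfulness: if any of $\omega_1,\omega_2$ lies in $\modelsOf{\Psi}$, then $\neg(\omega_1\lor\omega_2)\notin\beliefsOf{\Psi}$, so \eqref{pstl:C-1} and \eqref{pstl:C-2} together yield $\modelsOf{\Psi-\neg(\omega_1\lor\omega_2)} = \modelsOf{\Psi}$, which immediately gives both faithfulness clauses. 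Totality of $\leq_\Psi$ comes from the complementary case: if neither world is a model of $\Psi$, then \eqref{pstl:C-3} forces $\modelsOf{\Psi-\neg(\omega_1\lor\omega_2)}\setminus\modelsOf{\Psi}$ to be nonempty, while \eqref{pstl:C-4} confines it to $\{\omega_1,\omega_2\}$, so at least one of $\omega_1,\omega_2$ is captured.

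The delicate point is transitivity. Given $\omega_1\leq_\Psi\omega_2$ and $\omega_2\leq_\Psi\omega_3$, I would apply \eqref{pstl:C-6} and \eqref{pstl:C-7} to the three pair-formulas $\neg(\omega_1\lor\omega_2)$, $\neg(\omega_2\lor\omega_3)$, $\neg(\omega_1\lor\omega_3)$ and to their conjunction $\neg(\omega_1\lor\omega_2\lor\omega_3)$, combined with \eqref{pstl:C-5}. A case analysis on which of the $\omega_i$ appear in the various contraction outputs, using \eqref{pstl:C-4} to control the models satisfying the contracted formula, yields $\omega_1 \in \modelsOf{\Psi - \neg(\omega_1\lor\omega_3)}$. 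This case analysis, and in particular the interlocking use of \eqref{pstl:C-6} and \eqref{pstl:C-7}, is the main technical obstacle.

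Having $\leq_\Psi$ as a faithful total preorder, I would finish by establishing \eqref{eq:agmes_contraction} for arbitrary $\alpha$. The inclusion $\modelsOf{\Psi}\cup\min(\modelsOf{\negOf{\alpha}},\leq_\Psi)\subseteq\modelsOf{\Psi-\alpha}$ follows from \eqref{pstl:C-1}, together with an inductive use of \eqref{pstl:C-6} reducing minimality for arbitrary $\negOf{\alpha}$ to minimality for the pair-formulas used to define $\leq_\Psi$. The reverse inclusion combines \eqref{pstl:C-4}, which confines the models of $\Psi-\alpha$ satisfying $\alpha$ to $\modelsOf{\Psi}$, with \eqref{pstl:C-7} and \eqref{pstl:C-5}, which restrict the remaining models of $\Psi-\alpha$ to the $\leq_\Psi$-minimum of $\modelsOf{\negOf{\alpha}}$. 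I expect the transitivity argument and this last passage from pair-formulas to arbitrary $\alpha$ to require the most careful bookkeeping.
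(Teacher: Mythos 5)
Your proposal is correct and follows exactly the canonical Katsuno--Mendelzon-style argument: the paper itself does not prove Proposition~\ref{prop:es_contraction} (it imports it from Konieczny and Pino P\'erez), but the construction you use --- defining $\omega_1 \leq_\Psi \omega_2$ iff $\omega_1 \in \modelsOf{\Psi - \negOf{(\omega_1\lor\omega_2)}}$, establishing faithfulness and totality via \eqref{pstl:C-1}--\eqref{pstl:C-4}, transitivity via \eqref{pstl:C-5}--\eqref{pstl:C-7}, and then the success condition --- is precisely the one the paper deploys for the analogous Theorem~\ref{thm:weak_disimprovement}. The steps you flag as delicate (transitivity and the passage from pair-formulas to arbitrary $\alpha$) are indeed where the bookkeeping lies, but your identification of the relevant postulates for each is accurate.
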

In addition to the postulates \eqref{pstl:C-1} to \eqref{pstl:C-7}, Konieczny and Pino P{\'{e}}rez give DP-like postulates for intended iteration behaviour of contraction \cite{KS_KoniecznyPinoPerez2017}. 
In the following, we call these class of operators iterated contraction operators, which are characterized by the following proposition.

\begin{proposition}[Iterated Contraction{\cite{KS_KoniecznyPinoPerez2017}}]\label{prop:it_es_contraction}
Let $ - $ be a belief change operator $ - $ which satisfies \eqref{pstl:C-1} to \eqref{pstl:C-7}. 
Then $ - $ is an \emph{iterated contraction operator} if and only if there exists a faithful assignment $ \Psi\mapsto\leq_{\Psi} $ such that \eqref{eq:agmes_contraction} holds and the following is satisfied:
\begin{align*}
	& \ksIF \omega_1,\omega_2 \in \modelsOf{\alpha} \ksTHEN \omega_1 \leq_{\Psi} \omega_2 \Leftrightarrow \omega_1 \leq_{\Psi-\alpha} \omega_2                                       \\
& \ksIF \omega_1,\omega_2 \in \modelsOf{\negOf{\alpha}} \ksTHEN \omega_1 \leq_{\Psi} \omega_2 \Leftrightarrow \omega_1 \leq_{\Psi-\alpha} \omega_2                               \\
& \ksIF \omega_1\in\modelsOf{\negOf{\alpha}} \ksAND \omega_2\in\modelsOf{\alpha}  \ksTHEN    \omega_1 <_{\Psi} \omega_2 \Rightarrow \omega_1 <_{\Psi-\alpha} \omega_2  \\
& \ksIF \omega_1\in\modelsOf{\negOf{\alpha}} \ksAND \omega_2\in\modelsOf{\alpha}  \ksTHEN    \omega_1 \leq_{\Psi} \omega_2 \Rightarrow \omega_1 \leq_{\Psi-\alpha} \omega_2        
\end{align*}
\end{proposition}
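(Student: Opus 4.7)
The plan is to leverage Proposition~\ref{prop:es_contraction} (AGM Contraction for Epistemic States) to obtain a faithful assignment $\Psi \mapsto \leq_\Psi$ for any operator $-$ satisfying \eqref{pstl:C-1}--\eqref{pstl:C-7}, and then argue that the four semantic conditions on the preorders stand in bijective correspondence with the DP-like iteration postulates that implicitly define the class of \emph{iterated contraction operators}. So the task reduces to a standard representation-theorem bookkeeping in the style of Darwiche and Pearl, but adapted to the contraction case where both $\modelsOf{\Psi}$ and $\min(\modelsOf{\neg\alpha},\leq_\Psi)$ sit together at the bottom of $\leq_{\Psi-\alpha}$.

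For the ``only if'' direction, I would fix the faithful assignment supplied by Proposition~\ref{prop:es_contraction} and verify each of the four conditions by cases on the region to which $\omega_1,\omega_2$ belong. For each pair $(\omega_1,\omega_2)$ I pick small ``witness'' formulas whose model sets isolate the pair; then the iteration postulates, when unfolded through the semantic characterisation \eqref{eq:agmes_contraction} applied to $\Psi-\alpha$, deliver precisely the stated ordering information about $\leq_{\Psi-\alpha}$. The conditions for the two pure regions $\modelsOf{\alpha}\times\modelsOf{\alpha}$ and $\modelsOf{\neg\alpha}\times\modelsOf{\neg\alpha}$ follow from the postulates requiring invariance of the ordering under a contraction that does not involve the witness; the conditions for the mixed region follow from the postulates that limit how far counter-models of $\alpha$ may ``jump'' relative to models of $\alpha$.

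For the ``if'' direction, I assume a faithful assignment satisfying \eqref{eq:agmes_contraction} together with the four conditions and derive each iteration postulate. The recipe is: translate the postulate into a statement about $\modelsOf{\Psi-\alpha-\beta}$ versus $\modelsOf{\Psi-\beta}$ using \eqref{eq:agmes_contraction} twice, then compute the two minima by splitting along $\modelsOf{\alpha}$ and $\modelsOf{\neg\alpha}$, and finally invoke the corresponding semantic condition to compare them. Faithfulness of the assignment ensures that whenever the model sets agree, the belief sets also agree; syntax independence (which is implicit in the statement via \eqref{pstl:C-5}) is used throughout to replace formulas by any equivalent ones.

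The main obstacle is the mixed region: the third and fourth conditions give only one-directional implications, and one has to check that they are jointly strong enough to determine $\leq_{\Psi-\alpha}$ up to the freedom allowed by a faithful assignment. Concretely, one must argue that a world $\omega_1 \in \modelsOf{\neg\alpha}$ with $\omega_1 \in \min(\modelsOf{\neg\alpha},\leq_\Psi)$ ends up in the bottom layer of $\leq_{\Psi-\alpha}$ alongside $\modelsOf{\Psi}$, while non-minimal counter-models do not overtake models of $\alpha$ that previously dominated them. This is exactly the delicate point where the iterated contraction setting diverges from iterated revision, and where the interaction between \eqref{eq:agmes_contraction} and the mixed-region conditions must be examined with care; everything else is a routine adaptation of the Darwiche--Pearl representation argument.
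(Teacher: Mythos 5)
First, a point of comparison: the paper itself does not prove Proposition~\ref{prop:it_es_contraction}. It is imported as background from Konieczny and Pino P\'erez \cite{KS_KoniecznyPinoPerez2017}, and the appendix only supplies proofs for Theorems~\ref{thm:weak_disimprovement} and~\ref{thm:disimprovement}, Proposition~\ref{prop:decreasingassign_part_succ}, and the supporting lemmata. So there is no in-paper proof to measure your attempt against; it has to stand on its own.

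As written, your proposal has a genuine gap that keeps it from being a proof at all: the class of \emph{iterated contraction operators} is defined by explicit DP-style syntactic postulates (given in \cite{KS_KoniecznyPinoPerez2017} and deliberately not reproduced in this paper), and your argument never states them. The whole content of the proposition is the correspondence between those specific postulates and the four specific semantic conditions, so a sentence like ``the iteration postulates, when unfolded through the semantic characterisation, deliver precisely the stated ordering information'' asserts the theorem rather than proving it. Without the postulates in hand one cannot check, for example, which postulate yields the strict-preservation condition and which yields the non-strict one in the mixed region, nor whether the biconditionals in the two pure regions really come out as biconditionals. The general recipe you describe is the right one, and it is the one the paper itself uses for its own analogous results: encode the order by $\omega_1 \leq_\Psi \omega_2$ iff $\omega_1 \in \modelsOf{\Psi - \negOf{(\omega_1\lor\omega_2)}}$ and unfold everything through \eqref{eq:agmes_contraction}. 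But a complete proof must (i) write the iteration postulates down, (ii) for the only-if direction instantiate each with the witness formula $\negOf{(\omega_1\lor\omega_2)}$ and deal with the masking caused by $\modelsOf{\Psi}$ appearing as a summand on the right-hand side of \eqref{eq:agmes_contraction} (a world of $\modelsOf{\Psi}$ survives any contraction, so the witness reveals the order of a pair only after faithfulness has disposed of that case), and (iii) for the if direction carry out, postulate by postulate, the double unfolding of $\modelsOf{\Psi-\alpha-\beta}$ versus $\modelsOf{\Psi-\beta}$ that you only name. None of these steps is actually performed in your text, so the proposal is a plausible plan rather than a proof.
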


\subsection{Improvement Operators}
	
The idea of (weak) improvements is to split the process of an AGM revision for epistemic states \cite[p. 7ff]{KS_DarwichePearl1997} into multiple steps of an operator~$ \improve $.
For such a gradual operator $ \improve $ define $ \Psi \improveLimit \alpha = \Psi \improve^n \alpha $, 
	where $ n\in \naturals $ is smallest integer such that $ \alpha\notin\beliefsOf{\Psi \improve^n \alpha} $.
	In the initial paper about improvement operators \cite{KS_KoniecznyPinoPerez2008},  Konieczny and Pino P{\'{e}}rez gave postulates for $ \improve $, such that $ \improveLimit $ is an AGM revision for epistemic states.
	Due to space reasons, we refer the interested reader to the original paper for the postulates \cite{KS_KoniecznyPinoPerez2008}. 
	The following representation theorem gives an impression on weak improvement operators.
\begin{proposition}[Weak Improvement Operator{\cite[Thm. 1]{KS_KoniecznyPinoPerez2008}}]\label{prop:weak_improve}
	A belief change operator $ \improve $ is a weak improvement operator  if and only if there exists a strong faithful assignment $ \Psi\mapsto \leq_{\Psi} $ such that:
	\begin{equation*}
	\modelsOf{\Psi \improveLimit \alpha} = \min(\modelsOf{\alpha},\leq_{\Psi})
	\end{equation*}
\end{proposition}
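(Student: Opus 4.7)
The plan is to prove both directions by viewing $\improveLimit$ as an AGM revision operator on epistemic states and leveraging Katsuno and Mendelzon's plausibility-order characterization; strong faithfulness is then exactly what is needed to additionally guarantee syntax independence of the underlying order across iterated belief changes.

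For the ``if'' direction I would assume that a strong faithful assignment $\Psi \mapsto\, \leq_{\Psi}$ with $\modelsOf{\Psi \improveLimit \alpha} = \min(\modelsOf{\alpha},\leq_{\Psi})$ is given. This equation is precisely the Katsuno--Mendelzon recipe, so $\improveLimit$ behaves as an AGM revision operator on epistemic states. I would then verify each of the weak improvement postulates of \cite{KS_KoniecznyPinoPerez2008} by unfolding $\improveLimit = \improve^n$ for the appropriate $n$ and translating the postulate into a condition on $\leq_{\Psi}$; postulates that depend on syntactic equivalence are immediate from \eqref{pstl:SFA3}, and the belief-set postulates follow from $\min(\modelsOf{\alpha},\leq_{\Psi}) \subseteq \modelsOf{\alpha}$ together with the faithfulness conditions \eqref{pstl:SFA1} and \eqref{pstl:SFA2}.

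For the ``only if'' direction I would construct the assignment explicitly. For each $\Psi$, define
\begin{equation*}
\omega_1 \leq_{\Psi} \omega_2 \quad \text{iff} \quad \omega_1 \in \modelsOf{\Psi \improveLimit (\omega_1 \lor \omega_2)},
\end{equation*}
identifying each world with its complete conjunction. I would then check in turn: (i) totality and reflexivity, essentially by case inspection on $\Psi \improveLimit (\omega_1 \lor \omega_2)$; (ii) transitivity, by a three-world argument using the AGM-style property $\modelsOf{\Psi \improveLimit (\alpha \lor \beta)} \subseteq \modelsOf{\Psi \improveLimit \alpha} \cup \modelsOf{\Psi \improveLimit \beta}$ that holds because $\improveLimit$ is an AGM revision; (iii) \eqref{pstl:SFA1} and \eqref{pstl:SFA2}, by evaluating the defining equation when $\omega_1,\omega_2$ lie in or outside $\modelsOf{\Psi}$; (iv) \eqref{pstl:SFA3}, directly from the syntax-independence postulate satisfied by $\improve$; and (v) the representation equation $\modelsOf{\Psi \improveLimit \alpha} = \min(\modelsOf{\alpha},\leq_{\Psi})$, proved by showing both inclusions world-by-world from the definition of $\leq_{\Psi}$.

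The main obstacle will be ensuring that $\improveLimit$ is well-defined, i.e.\ that there exists a least $n$ at which the belief status of $\alpha$ in $\Psi \improve^n \alpha$ settles so the definition of $\improveLimit$ applies. This termination is guaranteed by the weak improvement postulates, which force a monotone movement of $\alpha$-worlds toward the bottom of the plausibility order; once termination is secured, transitivity of $\leq_{\Psi}$ and the representation equation follow by arguments adapted from the AGM case.
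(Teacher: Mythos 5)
The paper does not prove this proposition at all: it is imported verbatim as background, namely Theorem~1 of Konieczny and Pino P\'erez \cite{KS_KoniecznyPinoPerez2008}, and the weak improvement postulates themselves are not even restated here. So there is no in-paper proof to compare against. That said, your plan is the standard one and coincides both with the original argument in \cite{KS_KoniecznyPinoPerez2008} and with the way this paper proves its own analogous result, Theorem~\ref{thm:weak_disimprovement}: there the authors use exactly the canonical construction you propose, transposed to contraction, defining $\omega_1 \leq_\Psi \omega_2$ iff $\omega_1\in\modelsOf{\Psi\bullet \negOf{(\omega_1\lor\omega_2)}}$, and then verify totality, transitivity, \eqref{pstl:SFA1}--\eqref{pstl:SFA3} and the success equation case by case. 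Two cautions on your sketch. First, in the ``only if'' direction you justify the disjunction property $\modelsOf{\Psi \improveLimit (\alpha \lor \beta)} \subseteq \modelsOf{\Psi \improveLimit \alpha} \cup \modelsOf{\Psi \improveLimit \beta}$ by saying $\improveLimit$ ``is an AGM revision''; at that point of the proof you may only use the syntactic postulates on $\improveLimit$ (the K*7/K*8-style ones), not the semantic representation you are trying to establish, so this must be phrased as a consequence of the postulates to avoid circularity. Second, well-definedness of $\improveLimit$ in that direction is not something you need to derive from a ``monotone movement of $\alpha$-worlds'' (there is no order yet); it is directly postulated as the iterative-success condition, exactly as \eqref{pstl:hesitant_success} plays that role for the decrement case in this paper. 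With those adjustments your outline is a faithful reconstruction of the cited proof, but as written it remains a plan rather than a proof, since the postulate-by-postulate verification is not carried out.
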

Furthermore, the class of weak improvement operators is restricted by DP-like iteration postulates to the so-called improvement operators\cite{KS_KoniecznyPinoPerez2008}, which are unique\footnote{Note that the notion of improvement operators is not used consistently in the literature. For instance, the improvement operators as defined in \cite{KS_KoniecznyGrespanPinoPerez2010} are not unique.}.
Again, we refer to the work of  Konieczny and Pino P{\'{e}}rez \cite{KS_KoniecznyPinoPerez2008} for these postulates, and only present the characterisation in the framework of total preorders.
\begin{proposition}[Improvement Operator{\cite[Thm. 2]{KS_KoniecznyPinoPerez2008}}]\label{prop:improve}
	A weak improvement operator $ \improve $ is an improvement operator if and only if there exists a strong faithful $ \Psi\mapsto\leq_{\Psi} $ assignment such that
	\begin{align*}
	& \ksIF \omega_1,\omega_2 \in \modelsOf{\alpha} \ksTHEN \omega_1 \leq_{\Psi} \omega_2 \Leftrightarrow \omega_1 \leq_{\Psi\improve\alpha} \omega_2 \tag{S1} \label{pstl:S1}                                       \\
	& \ksIF \omega_1,\omega_2 \in \modelsOf{\negOf{\alpha}} \ksTHEN \omega_1 \leq_{\Psi} \omega_2 \Leftrightarrow \omega_1 \leq_{\Psi\improve\alpha} \omega_2  \tag{S2} \label{pstl:S2}                              \\
	& \ksIF \omega_1\in\modelsOf{{\alpha}} \ksAND \omega_2\in\modelsOf{\negOf{\alpha}}  \ksTHEN    \omega_1 \leq_{\Psi} \omega_2 \Rightarrow \omega_1 <_{\Psi\improve\alpha} \omega_2      \tag{S3} \label{pstl:S3} \\
	& \ksIF \omega_1\in\modelsOf{{\alpha}} \ksAND \omega_2\in\modelsOf{\negOf{\alpha}}  \ksTHEN    \omega_1 <_{\Psi} \omega_2 \Rightarrow \omega_1 \leq_{\Psi\improve\alpha} \omega_2  \tag{S4} \label{pstl:S4}           \\
	& \ksIF \omega_1\in\modelsOf{{\alpha}} \ksAND \omega_2\in\modelsOf{\negOf{\alpha}}  \ksTHEN    \omega_2 \ll_{\Psi} \omega_1 \Rightarrow \omega_1 \leq_{\Psi\improve\alpha} \omega_2 \tag{S5} \label{pstl:S5}
	\end{align*}
	holds and the following is satisfied:
	\begin{equation*}
	\modelsOf{\Psi \improveLimit \alpha} = \min(\modelsOf{\alpha},\leq_{\Psi})
	\end{equation*}
\end{proposition}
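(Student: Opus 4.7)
Since every improvement operator is in particular a weak improvement operator, Proposition~\ref{prop:weak_improve} already supplies a strong faithful assignment $\Psi \mapsto \leq_\Psi$ satisfying the limit identity $\modelsOf{\Psi \improveLimit \alpha} = \min(\modelsOf{\alpha}, \leq_\Psi)$; moreover this assignment is uniquely determined by $\improveLimit$ (up to the equivalences enforced by strong faithfulness). The remaining task is to show a two-way correspondence between the DP-like iteration postulates that distinguish improvement operators from weak improvement operators and the semantic conditions \eqref{pstl:S1}--\eqref{pstl:S5} on this assignment.

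For the direction $(\Rightarrow)$, I would fix the assignment given by Proposition~\ref{prop:weak_improve} and derive each of \eqref{pstl:S1}--\eqref{pstl:S5} individually. The standard Katsuno--Mendelzon trick is to encode the order between two chosen worlds $\omega_1,\omega_2$ via a formula $\mu$ whose models are exactly $\{\omega_1,\omega_2\}$: the limit identity then converts statements about $\leq_\Psi$ into statements about $\beliefsOf{\Psi \improveLimit \mu}$, to which the syntactic iteration postulates of improvement can be applied. A case analysis according to whether $\omega_1,\omega_2$ satisfy $\alpha$ or $\negOf{\alpha}$ then yields \eqref{pstl:S1}--\eqref{pstl:S5} in turn, with \eqref{pstl:S5} arising from the iteration postulate that bounds how far a model of $\alpha$ can be demoted in a single improvement step.

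For the direction $(\Leftarrow)$, I would assume a strong faithful assignment satisfying \eqref{pstl:S1}--\eqref{pstl:S5} together with the limit identity, and verify the syntactic improvement postulates. Each DP-like iteration postulate translates, via strong faithfulness and the limit identity, into a constraint on preorders of the form \eqref{pstl:S1}--\eqref{pstl:S5} and so is immediate. Beyond this, one must check that $\Psi \improveLimit \alpha$ is well-defined, i.e.\ that some finite $n$ witnesses $\alpha \notin \beliefsOf{\Psi \improve^n \alpha}$. An induction on the rank of a $<_\Psi$-minimal $\alpha$-model, using \eqref{pstl:S3}--\eqref{pstl:S5} to show that each application of $\improve\alpha$ either already removes $\alpha$ or strictly promotes an $\alpha$-model against a $\negOf{\alpha}$-model, closes the argument.

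\textbf{Main obstacle.} The delicate step is the calibration of \eqref{pstl:S5}: the direct-successor quantifier $\ll$ must be tight enough to guarantee termination of the iteration, yet loose enough to be derivable from the corresponding syntactic postulate. Showing that \eqref{pstl:S5} is strictly stronger than what \eqref{pstl:S3}--\eqref{pstl:S4} alone imply, and that it is exactly the condition extracted from the "minimal demotion" iteration postulate, is where the technical core of the proof lies.
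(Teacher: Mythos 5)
First, a point about the target: this paper does not prove Proposition~\ref{prop:improve} at all. It is imported verbatim as background from Konieczny and Pino P\'erez (cited as Thm.~2 of that work), the appendix explicitly restricts itself to the two representation theorems for decrement operators and Proposition~\ref{prop:decreasingassign_part_succ}, and the paper even declines to state the syntactic postulates that define improvement operators, referring the reader to the original source. So there is no in-paper proof to match your attempt against; any comparison can only be with the general technique the authors use for their own analogous results (Theorems~\ref{thm:weak_disimprovement} and~\ref{thm:disimprovement}), where the assignment is defined concretely by $\omega_1 \leq_\Psi \omega_2$ iff $\omega_1 \in \modelsOf{\Psi \stepwiseLimit \negOf{(\omega_1\lor\omega_2)}}$ and each semantic condition is derived from its syntactic counterpart by instantiating the postulates on the two-world formula $\negOf{(\omega_1\lor\omega_2)}$. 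Your plan is the revision-side mirror of exactly that technique, so the strategy is the right one.

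That said, what you have written is a plan, not a proof, and the gap is concrete: you never state the iteration postulates that distinguish an improvement operator from a weak improvement operator, so the claimed postulate-by-postulate derivation of \eqref{pstl:S1}--\eqref{pstl:S5} (forward direction) and the converse verification (backward direction) are asserted rather than performed. In particular the step you yourself flag as the technical core --- extracting \eqref{pstl:S5}, with its direct-successor relation $\ll_\Psi$, from the ``minimal demotion'' postulate --- is precisely the step that cannot be checked without writing the postulate down and arguing through the case analysis (compare how the paper needs Lemma~\ref{lem:order_formula} to mediate between $\llcurly_\Psi^\circ$ on formulas and $\ll_\Psi$ on worlds for \eqref{pstl:DR12}; an analogous bridging lemma is needed here and is absent). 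Two smaller issues: the claim that the assignment from Proposition~\ref{prop:weak_improve} is essentially unique is used but not justified, and the termination argument you propose for the backward direction is moot, since both directions of the proposition already assume $\improve$ is a weak improvement operator, which guarantees that $\Psi \improveLimit \alpha$ is well defined.
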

In the following section we use the basic ideas of (weak) improvement operators as a starting point for developing the weak decrement operators.

\section{Weak Decrement Operators}
\label{sec:weak_disimprovements}

A property of a contraction operator $ - $ is that the success condition of contraction is instantaneously achieved, i.e., if $ \alpha $ is believed in a state ($ \alpha \in \beliefsOf{\Psi}  $) then after the contraction with $ \alpha $, it is not believed any more ($ \alpha \notin \beliefsOf{\Psi-\alpha} $).
As a generalisation, we define \hesitant\ contractions as operators who achieve the success condition of contraction after multiple consecutive applications.

\begin{definition}
A belief change operator $ \circ $ is called a \headword{\hesitant\ contraction operator} if the following postulate is fulfilled:
	\begin{align*}
& \ksIF \alpha\not\equiv\top \ksTHEN  
\text{there exists } n\in\naturals_0 \text{ such that } \alpha \notin \beliefsOf{ \Psi \circ^n \alpha}
 \tag{hesitance} \label{pstl:hesitant_success} 
\end{align*} 
\end{definition}
If  $ \circ $ is an \hesitant\ contraction operator, then we define a corresponding operator $ \bullet $ by $ \Psi \bullet \alpha = \Psi \circ^n \alpha $, where $ n=0 $ if $ \alpha\equiv\top $, otherwise $ n $ is the smallest integer such that $ \alpha\notin\beliefsOf{\Psi \circ^n \alpha} $.

The following Example \ref{exmpl:slow_love} shows a modelling application for \hesitant\ belief change operators.
\begin{example}\label{exmpl:slow_love}
Addison bought a new mobile with much easier handling. 
She does no longer have to press a sequence of buttons to access her favourite application. 
However, it takes multiple changes of her epistemic state before she contracts the belief of having to press the sequence of buttons for her favourite application.
\end{example}

We now introduce weak \decrement\ operators, which fulfil AGM-like contraction postulates, adapted for the decrement of beliefs.
\begin{definition}[Weak Decrement Operator]
	A belief change operator $ \circ $ is called a \headword{weak \decrement\ operator} if the following postulates are fulfilled:
	\begin{align*}
	& \beliefsOf{\Psi \bullet \alpha} \subseteq  \beliefsOf{\Psi} \tag{D1} \label{pstl:D1} \\
	& \ksIF \alpha \notin \beliefsOf{\Psi} \ksTHEN \beliefsOf{\Psi} \subseteq \beliefsOf{\Psi \bullet \alpha} \tag{D2} \label{pstl:D2} \\
		& \circ \text{ is a \hesitant\ contraction operator} \tag{D3} \label{pstl:D3} \\
	 &  \beliefsOf{\Psi} \subseteq  Cn(\beliefsOf{\Psi \bullet \alpha}\cup\{\alpha\}) \tag{D4} \label{pstl:D4}\\
	& \ksIF \alpha_1\equiv \beta_1, ..., \alpha_n\equiv \beta_n \ksTHEN	 \beliefsOf{\Psi\circ\alpha_1\circ...\circ\alpha_n}\! =\! \beliefsOf{\Psi\circ\beta_1\circ...\circ\beta_n} \tag{D5} \label{pstl:D5}\\
	& \beliefsOf{\Psi\bullet\alpha} \cap \beliefsOf{\Psi\bullet\beta} \subseteq \beliefsOf{\Psi\bullet(\alpha\land\beta)} \tag{D6} \label{pstl:D6}\\
	& \ksIF \beta\notin \beliefsOf{\Psi \bullet (\alpha\land\beta)} \ksTHEN \beliefsOf{\Psi \bullet (\alpha\land\beta)} \subseteq \beliefsOf{\Psi\bullet\beta} \tag{D7} \label{pstl:D7}
	\end{align*} 
\end{definition}
The postulates \eqref{pstl:D1} to \eqref{pstl:D7} correspond to the postulates \eqref{pstl:C-1} to \eqref{pstl:C-7}.
By \eqref{pstl:D1} a weak \decrement\ does not add new beliefs, and together with \eqref{pstl:D2} the beliefs of an agent are not changed if $ \alpha $ is not believed priorly.
\eqref{pstl:D3} ensures that after enough consecutive application a belief $ \alpha  $ is removed. 
\eqref{pstl:D4} is the recovery postulate, stating that removing $ \alpha $ and then adding $ \alpha $ again recovers all initial beliefs.
The postulate \eqref{pstl:D5} ensures syntax independence in the case of iteration.
\eqref{pstl:D6} and \eqref{pstl:D7} 
state that a contraction of a conjunctive belief is constrained by the results of the contractions with each of the conjuncts alone.

For the class of weak \decrement\ operators the following representation theorem holds:
\begin{theorem}[Representation Theorem: Weak Decrement Operators]\label{thm:weak_disimprovement}
	Let $ \circ $  be a belief change operator. Then the following items are equivalent:
	\begin{enumerate}[(a)]
		\item $ \circ $ is a weak \decrement\ operator 
		\item there exists a strong faithful assignment $ \Psi \mapsto \leq_\Psi $ with respect to $ \circ $ such that:
		\begin{align*}
	\tag{decrement sucess} \label{pstl:SFAdis}	\text{there exists } n\in\naturals_0  &\text{ such that } \modelsOf{\Psi \circ^n \alpha} = \modelsOf{\Psi} \cup \min(\modelsOf{\negOf{\alpha}}, \leq_{\Psi} )  \\
	& \ksAND n \text{ is the smallest integer such that } \modelsOf{\Psi \circ^n \alpha} \not\subseteq \modelsOf{\alpha}   \notag
		\end{align*}
	\end{enumerate}
\end{theorem}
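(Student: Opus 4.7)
My plan is to reduce the theorem to Proposition~\ref{prop:es_contraction} by working with the induced operator $\bullet$, and to handle strong faithfulness and the success-index separately.

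For the direction (a) $\Rightarrow$ (b), I would first verify that the induced operator $\bullet$ satisfies all of \eqref{pstl:C-1}--\eqref{pstl:C-7}. The postulates \eqref{pstl:D1}, \eqref{pstl:D2}, \eqref{pstl:D4}, \eqref{pstl:D6}, \eqref{pstl:D7} give \eqref{pstl:C-1}, \eqref{pstl:C-2}, \eqref{pstl:C-4}, \eqref{pstl:C-6}, \eqref{pstl:C-7} essentially verbatim. For \eqref{pstl:C-3}, the definition of $\bullet$ together with \eqref{pstl:D3} ensures $\alpha \notin \beliefsOf{\Psi \bullet \alpha}$ whenever $\alpha \not\equiv \top$. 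For \eqref{pstl:C-5}, applying \eqref{pstl:D5} to the constant sequences $\alpha,\ldots,\alpha$ and $\beta,\ldots,\beta$ shows that equivalent formulas yield equal beliefs after every iteration, so the smallest success index and the resulting belief set coincide. Proposition~\ref{prop:es_contraction} then supplies a faithful assignment $\Psi \mapsto \leq_\Psi$ satisfying $\modelsOf{\Psi \bullet \alpha} = \modelsOf{\Psi} \cup \min(\modelsOf{\negOf{\alpha}}, \leq_\Psi)$.

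To upgrade this to a strong faithful assignment with respect to $\circ$, I would define $\leq_\Psi$ canonically from the $\bullet$-images of $\Psi$, so that \eqref{pstl:SFA1} and \eqref{pstl:SFA2} come directly from faithfulness, while \eqref{pstl:SFA3} can be extracted from \eqref{pstl:D5}: iterating \eqref{pstl:D5} shows that $\Psi \circ \alpha_1 \circ \ldots \circ \alpha_n$ and $\Psi \circ \beta_1 \circ \ldots \circ \beta_n$ (with $\alpha_i \equiv \beta_i$) have identical $\bullet$-images under every further argument, hence are assigned equal preorders. The decrement success statement then follows by unwinding the definition of $\bullet$, with $n$ equal to the smallest index for which $\alpha \notin \beliefsOf{\Psi \circ^n \alpha}$; by deductive closure of beliefs this is equivalent to $\modelsOf{\Psi \circ^n \alpha} \not\subseteq \modelsOf{\alpha}$.

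For the direction (b) $\Rightarrow$ (a), I would define $\bullet$ from $\circ$ via the success-index supplied by the hypothesis. The decrement success condition directly yields $\modelsOf{\Psi \bullet \alpha} = \modelsOf{\Psi} \cup \min(\modelsOf{\negOf{\alpha}}, \leq_\Psi)$, and the converse direction of Proposition~\ref{prop:es_contraction} then gives \eqref{pstl:C-1}--\eqref{pstl:C-7} for $\bullet$; these translate back to \eqref{pstl:D1}, \eqref{pstl:D2}, \eqref{pstl:D4}, \eqref{pstl:D6}, \eqref{pstl:D7}. The postulate \eqref{pstl:D3} is immediate from the existence of the success index, and \eqref{pstl:D5} follows from \eqref{pstl:SFA3}, since equal preorders force equal minimal layers (via \eqref{pstl:SFA1}, \eqref{pstl:SFA2}) and hence equal belief sets. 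The main obstacle, I expect, lies in the direction (a) $\Rightarrow$ (b): Proposition~\ref{prop:es_contraction} supplies only a faithful assignment for $\bullet$, so promoting it to a strong faithful assignment for the stepwise operator $\circ$ requires a concrete construction of $\leq_\Psi$ that reads off information from all $\bullet$-images of $\Psi$, together with careful bookkeeping of \eqref{pstl:D5} on iterated compositions. Once this is in place, the remainder is essentially a translation between the postulates on $\bullet$ and the representation they induce.
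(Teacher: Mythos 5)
Your proposal is correct in outline but takes a genuinely different route from the paper. The paper does not invoke Proposition~\ref{prop:es_contraction} at all: it constructs the assignment explicitly by setting $\omega_1 \leq_\Psi \omega_2$ iff $\omega_1 \in \modelsOf{\Psi\bullet\negOf{(\omega_1\lor\omega_2)}}$, then verifies by hand that this is a total preorder (transitivity needs \eqref{pstl:D5}, \eqref{pstl:D7} and a lemma on contracting the negation of a single world), that it is strongly faithful, and that it satisfies \eqref{pstl:SFAdis}; the converse direction likewise checks \eqref{pstl:D1}--\eqref{pstl:D7} one by one against the semantic characterisation. Your reduction to Proposition~\ref{prop:es_contraction} via the induced operator $\bullet$ is legitimate---the translation of \eqref{pstl:D1}--\eqref{pstl:D7} into \eqref{pstl:C-1}--\eqref{pstl:C-7} for $\bullet$, including deriving \eqref{pstl:C-5} from \eqref{pstl:D5} applied to constant sequences, is sound---and it buys you the total-preorder, faithfulness and representation verifications for free in both directions.

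The step you leave open, however, is precisely the step that carries the remaining content: promoting the faithful assignment delivered by Proposition~\ref{prop:es_contraction} to a \emph{strong} faithful assignment with respect to $\circ$. The clean way to close this is to observe that any faithful assignment satisfying $\modelsOf{\Psi\bullet\alpha} = \modelsOf{\Psi}\cup\min(\modelsOf{\negOf{\alpha}},\leq_\Psi)$ is uniquely determined: instantiating $\alpha = \negOf{(\omega_1\lor\omega_2)}$ and using faithfulness yields $\omega_1\leq_\Psi\omega_2$ iff $\omega_1\in\modelsOf{\Psi\bullet\negOf{(\omega_1\lor\omega_2)}}$, so the assignment from Proposition~\ref{prop:es_contraction} necessarily coincides with the paper's canonical order. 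With that uniqueness in hand, \eqref{pstl:SFA3} does follow from \eqref{pstl:D5}, since equivalent argument sequences give identical belief sets under every further iteration and hence identical $\bullet$-images and identical induced orders. As written, ``define $\leq_\Psi$ canonically from the $\bullet$-images'' is a promissory note rather than an argument; you correctly diagnose where the difficulty sits, but you should either supply the uniqueness observation or accept that you end up performing the paper's explicit construction anyway. Everything else in your sketch translates correctly, modulo the degenerate case $\alpha\equiv\top$, which both you and the statement of \eqref{pstl:SFAdis} treat somewhat loosely and which the paper handles by a separate case in the definition of $\bullet$.
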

From Theorem \ref{thm:weak_disimprovement} we easily get the following corollary:
	\begin{corollary}
		If $ \circ $ is a weak \decrement\ operator, then $ \bullet $ fulfils \eqref{pstl:C-1} to \eqref{pstl:C-7}.
		Furthermore, every belief change operator that fulfils \eqref{pstl:C-1} to \eqref{pstl:C-7} and \eqref{pstl:D5} is a weak \decrement\ operator. 
	\end{corollary}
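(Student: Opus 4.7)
The plan is to derive both directions by observing that the representation Theorem~\ref{thm:weak_disimprovement} for weak decrement operators and Proposition~\ref{prop:es_contraction} for AGM contractions share the same semantic fingerprint $\modelsOf{\Psi - \alpha} = \modelsOf{\Psi} \cup \min(\modelsOf{\negOf{\alpha}},\leq_\Psi)$, so the corollary essentially amounts to matching the two characterisations through the definition of $\bullet$.

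For the first claim I would apply Theorem~\ref{thm:weak_disimprovement} to obtain a strong faithful assignment $\Psi\mapsto\leq_\Psi$ such that $\modelsOf{\Psi\circ^n\alpha} = \modelsOf{\Psi}\cup\min(\modelsOf{\negOf{\alpha}},\leq_\Psi)$ for the smallest $n$ with $\modelsOf{\Psi\circ^n\alpha}\not\subseteq\modelsOf{\alpha}$. Because $\bullet$ is defined precisely by $\Psi\bullet\alpha = \Psi\circ^n\alpha$ for this $n$ (trivially $\modelsOf{\Psi\bullet\top}=\modelsOf{\Psi}$ covers the tautological case), this immediately yields $\modelsOf{\Psi\bullet\alpha} = \modelsOf{\Psi}\cup\min(\modelsOf{\negOf{\alpha}},\leq_\Psi)$. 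Since a strong faithful assignment is in particular faithful, the right-to-left direction of Proposition~\ref{prop:es_contraction} applied to $\bullet$ delivers \eqref{pstl:C-1}--\eqref{pstl:C-7}.

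For the second claim, assume $\circ$ satisfies \eqref{pstl:C-1}--\eqref{pstl:C-7} together with \eqref{pstl:D5}. By Proposition~\ref{prop:es_contraction} there is a faithful assignment with $\modelsOf{\Psi\circ\alpha} = \modelsOf{\Psi}\cup\min(\modelsOf{\negOf{\alpha}},\leq_\Psi)$. I would then verify the decrement success condition of Theorem~\ref{thm:weak_disimprovement}(b) with $n=1$ whenever $\alpha\in\beliefsOf{\Psi}$ and $\alpha\not\equiv\top$ (the nonempty set $\min(\modelsOf{\negOf{\alpha}},\leq_\Psi)$ lies in $\modelsOf{\negOf{\alpha}}$, so $\modelsOf{\Psi\circ\alpha}\not\subseteq\modelsOf{\alpha}$), and with $n=0$ otherwise (faithfulness forces $\min(\modelsOf{\negOf{\alpha}},\leq_\Psi)\subseteq\modelsOf{\Psi}$ when $\alpha\notin\beliefsOf{\Psi}$, so the right-hand side collapses to $\modelsOf{\Psi}$, and in the tautological case both sides equal $\modelsOf{\Psi}$). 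Applying the (b)$\Rightarrow$(a) direction of Theorem~\ref{thm:weak_disimprovement} then concludes that $\circ$ is a weak decrement operator.

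The main obstacle is bridging the gap from ordinary to strong faithfulness in the second direction: Proposition~\ref{prop:es_contraction} only guarantees faithfulness, whereas Theorem~\ref{thm:weak_disimprovement}(b) additionally demands \eqref{pstl:SFA3}, i.e.\ strict equality of the preorders after iterated applications of syntactically distinct but equivalent formulas. This is precisely where \eqref{pstl:D5} is used: I would pick the faithful assignment canonically, so that $\leq_{\Psi'}$ depends only on the belief-theoretic behaviour of $\Psi'$ (e.g.\ via the beliefs of $\Psi'$ itself and of its further contractions). Then \eqref{pstl:D5}, which equates the beliefs of $\Psi\circ\alpha_1\circ\cdots\circ\alpha_n$ and $\Psi\circ\beta_1\circ\cdots\circ\beta_n$ under pointwise equivalence, immediately forces equality of the associated preorders, thereby establishing \eqref{pstl:SFA3}.
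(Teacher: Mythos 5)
Your proof is correct and follows the route the paper intends: the paper offers no explicit proof of this corollary, remarking only that it follows from Theorem~\ref{thm:weak_disimprovement}, and your derivation by matching the semantic characterisations of Theorem~\ref{thm:weak_disimprovement} and Proposition~\ref{prop:es_contraction} through the definition of $\bullet$ (with $n=1$ when $\alpha\in\beliefsOf{\Psi}$, $\alpha\not\equiv\top$, and $n=0$ otherwise) is exactly that argument. You also correctly isolate and resolve the one genuine subtlety, namely upgrading the merely faithful assignment of Proposition~\ref{prop:es_contraction} to a strong one: since the assignment satisfying \eqref{eq:agmes_contraction} is recoverable from the belief-level behaviour of the operator, postulate \eqref{pstl:D5} yields \eqref{pstl:SFA3}.
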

This shows that weak \decrement\ operators are (up to \eqref{pstl:D5}) a generalisation of AGM contraction for epistemic states in the sense of Proposition \ref{prop:es_contraction}.

\section{Decrement Operators}
\label{sec:disimprovements}

We now introduce an ordering on the formulas in order to shorten our notion in the following postulates.
\begin{definition}\label{def:giveuprelation}
Let $ \circ $ be a  \hesitant\ contraction operator, then we define for every epistemic state $ \Psi $  and every two formula $ \alpha,\beta $:
\begin{equation*}
 \alpha	\preceq_\Psi^\circ \beta \text{ iff }   \beliefsOf{\Psi\bullet \alpha\beta} \subseteq  \beliefsOf{\Psi\bullet \alpha}
\end{equation*}
With $ \prec_\Psi^\circ  $ we denote the  strict variant of $ \preceq_\Psi^\circ $ and define $ \alpha \llcurly_\Psi^\circ \beta $ if $ \alpha \prec_\Psi^\circ \beta $ and there is no $ \gamma $ such that $ \alpha \prec_\Psi^\circ \gamma \prec_\Psi^\circ \beta$.
\end{definition}
Intuitively $ \alpha \prec_\Psi^\circ \beta $ means that in the state $ \Psi $ the agent is more willing to give up the belief $ \alpha $ than the belief $ \beta $.

For the iteration of decrement operators we give the following postulates:
	\begin{align*}
	& \ksIF 
	\negOf{\alpha} \models \beta
	\ksTHEN \beliefsOf{\Psi\circ\alpha\bullet\beta} =_\alpha \beliefsOf{\Psi\bullet\beta} \tag{D8} \label{pstl:D8} \\
	& \ksIF 
	\alpha\models\beta
	\ksTHEN \beliefsOf{\Psi\circ\alpha\bullet\beta} =_{\negOf{\beta}} \beliefsOf{\Psi\bullet\beta}  \tag{D9} \label{pstl:D9} \\
	&
	\ksIF 
	\alpha\models\gamma
	\ksTHEN \Psi \circ\alpha \bullet\beta \models \gamma \Rightarrow \Psi\bullet\beta \models \gamma \tag{D10} \label{pstl:D10} \\
	& \ksIF 
	\negOf{\alpha}\models\gamma
	\ksTHEN  \Psi\bullet\beta \models \gamma \ \Rightarrow\  \Psi\circ\alpha \bullet\beta \models \gamma \tag{D11} \label{pstl:D11}  \\
	& \ksIF \alpha\models\beta \ksAND \negOf{\alpha}\models\gamma \ksTHEN \gamma \llcurly_\Psi^\circ \beta \Rightarrow \beta \preceq^\circ_{\Psi\circ\alpha} \gamma \tag{D12} \label{pstl:D12} \\
	& \beliefsOf{\Psi\circ\alpha}\subseteq \beliefsOf{\Psi} \tag{D13} \label{pstl:D13} 
	\end{align*}
	\eqref{pstl:D8} states that a prior decrement with $ \alpha $ does not influence the beliefs of an decrement with $ \beta $ if $ \negOf{\alpha}\models\beta $.
	\eqref{pstl:D9} states that a prior decrement with $ \alpha $ does not influence the beliefs of an decrement with $ \beta $ if $ \alpha\models\beta $.
The postulate \eqref{pstl:D10} states that if a belief in $ \gamma $ is believed after a decrement of $ \alpha $ and the removal of $ \beta $, then only a removal of $ \beta $ does not influence the belief in $ \gamma
	 $ if $ \alpha\negOf{\gamma} $ implies $ \beta $.
	By \eqref{pstl:D11}, if $ \gamma $ and $ \alpha $ do not share anything, then a decrease of $ \alpha $ does not influence this belief. 
	By \eqref{pstl:D12}, if in the state $ \Psi $ the agent prefers removing a consequence of $ \negOf{\alpha} $ minimally more than removing a consequence of $ \alpha $, then after a decrement of $ \alpha $, she is more willing to remove the consequence of $ \alpha $.
	The postulate \eqref{pstl:D13} axiomatically enforces that a single step does not add beliefs.

	We call operators that fulfil these postulates \decrement\ operators.
\begin{definition}[Decrement Operator]\label{def:disimprovement}
	A $ \circ $ weak \decrement\  operator is called a \headword{\decrement\ operator} if $ \circ $ satisfies \eqref{pstl:D8} -- \eqref{pstl:D13}.
\end{definition}

On the semantic side, we define a specific form of strong faithful assignment which implements decrementing on total preorders.
\begin{definition}[Decreasing Assignment]\label{def:decreasing_assignment}
Let $ \circ $ be a \hesitant\ belief change operator. A strong faithful assignment $ \Psi\mapsto\leq_{\Psi} $ with respect to $ \circ $ is said to be a \headword{decreasing assignment} (with respect to $ \circ $) if the following postulates are satisfied:
	\begin{align*}
		 & \ksIF \omega_1,\omega_2 \in \modelsOf{\alpha} \ksTHEN \omega_1 \leq_{\Psi} \omega_2 \Leftrightarrow \omega_1 \leq_{\Psi\circ\alpha} \omega_2 \tag{DR8} \label{pstl:DR8}          \\
		 & \ksIF \omega_1,\omega_2 \in \modelsOf{\negOf{\alpha}} \ksTHEN \omega_1 \leq_{\Psi} \omega_2 \Leftrightarrow \omega_1 \leq_{\Psi\circ\alpha} \omega_2  \tag{DR9} \label{pstl:DR9} \\
		 & \ksIF \omega_1\in\modelsOf{\negOf{\alpha}} \ksAND \omega_2\in\modelsOf{\alpha}  \ksTHEN    \omega_1 \leq_{\Psi} \omega_2 \Rightarrow \omega_1 \leq_{\Psi\circ\alpha} \omega_2      \tag{DR10} \label{pstl:DR10}                  \\
		 & \ksIF \omega_1\in\modelsOf{\negOf{\alpha}} \ksAND \omega_2\in\modelsOf{\alpha}  \ksTHEN    \omega_1 <_{\Psi} \omega_2 \Rightarrow \omega_1 <_{\Psi\circ\alpha} \omega_2  \tag{DR11} \label{pstl:DR11}                  \\
		 & \ksIF \omega_1\in\modelsOf{\negOf{\alpha}} \ksAND \omega_2\in\modelsOf{\alpha}  \ksTHEN    \omega_2 \ll_{\Psi} \omega_1 \Rightarrow \omega_1 \leq_{\Psi\circ\alpha} \omega_2 \tag{DR12} \label{pstl:DR12} \\
 & \ksIF \omega_1\in\modelsOf{\negOf{\alpha}},\ \omega_2\in\modelsOf{\alpha} \ksAND \omega_2 \leq_{\Psi} \omega_3 \text{ for all } \omega_3 \ksTHEN    
 \omega_2 \leq_{\Psi\circ\alpha} \omega_1 \tag{DR13} \label{pstl:DR13}
	\end{align*}
\end{definition}
The postulates \eqref{pstl:DR8} to \eqref{pstl:DR11} are the same as given by Konieczny and {Pino P{\'{e}}rez} \cite{KS_KoniecznyPinoPerez2017} for iterated contraction (cf. Proposition \ref{prop:it_es_contraction}).
The postulate \eqref{pstl:DR12} states that a world of $ \negOf{\alpha} $ which is minimally less plausible than a world of $ \alpha $ should be made at least as plausible as this world of $ \alpha $. \eqref{pstl:DR13} ensures that (together with the other postulates) that world in $ \modelsOf{\Psi} $ stays plausible after a decrement.

The main result is that \decrement\ operators are exactly those which are compatible with a decreasing assignment.
\begin{theorem}[Representation Theorem: Decrement Operators]\label{thm:disimprovement}
	Let $ \circ $  be a belief change operator. Then the following items are equivalent:
\begin{enumerate}[(a)]
	\item $ \circ $ is a \decrement\ operator 
	\item there exists a decreasing assignment $ \Psi \mapsto \leq_\Psi $ with respect to $ \circ $ that satisfies \eqref{pstl:SFAdis}, i.e.:
\begin{align*}
\text{there exists } n\in\naturals_0  &\text{ such that } \modelsOf{\Psi \circ^n \alpha} = \modelsOf{\Psi} \cup \min(\modelsOf{\negOf{\alpha}}, \leq_{\Psi} ) \notag \\
& \ksAND n \text{ is the smallest integer such that } \modelsOf{\Psi \circ^n \alpha} \not\subseteq \modelsOf{\alpha}
\end{align*}
\end{enumerate}
\end{theorem}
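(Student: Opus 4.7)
The plan is to bootstrap from Theorem \ref{thm:weak_disimprovement}, which already handles the weak decrement core, and then show independently that the seven extra iteration postulates \eqref{pstl:D8}--\eqref{pstl:D13} correspond exactly to the semantic conditions \eqref{pstl:DR8}--\eqref{pstl:DR13} on the preorders produced by the strong faithful assignment. Since \eqref{pstl:SFA3} together with the weak decrement success equation pin down the semantic action of $\circ$ on models in the successor state, reading off ordering relations between individual worlds from syntactic postulates becomes possible via carefully chosen test formulas.

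For the direction (a) $\Rightarrow$ (b), I would start from a \decrement\ operator $\circ$; Theorem \ref{thm:weak_disimprovement} yields a strong faithful assignment $\Psi \mapsto \leq_\Psi$ satisfying \eqref{pstl:SFAdis}. To extract \eqref{pstl:DR8}--\eqref{pstl:DR11}, given any two worlds $\omega_1,\omega_2$ I would pick $\alpha$ whose models sit exactly where needed (e.g., isolating $\omega_1$ as a model of $\neg\alpha$ and $\omega_2$ as a model of $\alpha$, or both in $\modelsOf{\alpha}$), and then translate the hypothesis of the corresponding syntactic postulate into a statement about minima of layers in $\leq_\Psi$ and $\leq_{\Psi \circ \alpha}$ via the \eqref{pstl:SFAdis} characterisation of $\bullet$. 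For \eqref{pstl:DR12} I would exploit that $\gamma \llcurly^\circ_\Psi \beta$ captures the syntactic counterpart of "direct successor layer", so \eqref{pstl:D12} applied to suitable conjunctions of literal characterisations forces the desired world-level ordering inequality. Finally, \eqref{pstl:DR13} is read off from \eqref{pstl:D13}: since no beliefs are added by a single $\circ$-step, $\modelsOf{\Psi} \subseteq \modelsOf{\Psi\circ\alpha}$, which combined with the other DR-conditions yields that former minima remain minima.

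For (b) $\Rightarrow$ (a), the weak decrement axioms \eqref{pstl:D1}--\eqref{pstl:D7} come directly from Theorem \ref{thm:weak_disimprovement}, so only \eqref{pstl:D8}--\eqref{pstl:D13} need verification. The strategy is to translate each postulate semantically using \eqref{pstl:SFAdis} and the characterisation of $\bullet$ as minima-selection. For \eqref{pstl:D8}--\eqref{pstl:D9}, the conditions \eqref{pstl:DR8}--\eqref{pstl:DR9} preserve the relative ordering within $\modelsOf{\alpha}$ and within $\modelsOf{\neg\alpha}$ separately, which combined with the implication hypotheses ($\neg\alpha \models \beta$ or $\alpha \models \beta$) ensures the minima of $\neg\beta$ relevant for $=_\alpha$ or $=_{\neg\beta}$ agree before and after a single $\circ\alpha$ step. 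Postulates \eqref{pstl:D10}--\eqref{pstl:D11} follow analogously by monotonic layer preservation across $\modelsOf{\alpha}$ and $\modelsOf{\neg\alpha}$. For \eqref{pstl:D12}, rewriting $\gamma \llcurly^\circ_\Psi \beta$ as "the minima of $\neg\gamma$ are in the layer immediately above the minima of $\neg\beta$" reduces the obligation to an instance of \eqref{pstl:DR12}. Postulate \eqref{pstl:D13} is immediate from \eqref{pstl:DR13}, which says models of $\Psi$ remain minimal after $\circ\alpha$.

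I expect the main obstacle to be the \eqref{pstl:D12}$\leftrightarrow$\eqref{pstl:DR12} correspondence, because the formula ordering $\llcurly^\circ_\Psi$ is defined via $\bullet$ — itself a multi-step iterate — whereas \eqref{pstl:DR12} constrains the single-step preorder $\leq_{\Psi\circ\alpha}$. Bridging this gap requires constructing $\beta,\gamma$ whose $\neg$-models form precisely two adjacent layers of $\leq_\Psi$, so that the syntactic "direct successor" becomes the semantic $\ll_\Psi$ relation; care is also needed because $\bullet$ may require multiple $\circ$-steps whose cumulative effect on ordering must be controlled via repeated application of \eqref{pstl:DR8}--\eqref{pstl:DR11}. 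A secondary subtlety is the interplay of \eqref{pstl:D13} with \eqref{pstl:D3}: one must rule out that a single-step contraction transiently removes a previously-held belief only to reintroduce it, which \eqref{pstl:DR13} together with layer preservation precisely prevents.
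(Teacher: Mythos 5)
Your proposal follows essentially the same route as the paper's proof: both directions bootstrap from Theorem~\ref{thm:weak_disimprovement}, establish a postulate-by-postulate correspondence between \eqref{pstl:D8}--\eqref{pstl:D13} and \eqref{pstl:DR8}--\eqref{pstl:DR13} using two-world test formulas of the form $\negOf{(\omega_1\lor\omega_2)}$, and bridge the \eqref{pstl:D12}/\eqref{pstl:DR12} gap exactly where you anticipate it, via a semantic characterisation of $\llcurly_\Psi^\circ$ as ``adjacent or equal layers of minima'' (the paper's Lemma~\ref{lem:order_formula}) together with a partial-success lemma derived from \eqref{pstl:DR8}, \eqref{pstl:DR9}, \eqref{pstl:DR11} and \eqref{pstl:DR13}. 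Your plan is a sketch rather than a worked-out proof, but its decomposition, test-formula technique, and identification of the main technical obstacle all match the paper's argument.
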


The following proposition presents a nice property of \decrement\ operators:
Like AGM contraction for epistemic sates (cf. Proposition \ref{prop:es_contraction}) a decrement operators keeps plausible worlds; and only the least unplausible counter-worlds may become plausible.
\begin{proposition}\label{prop:decreasingassign_part_succ}
	Let $ \circ $ be a \hesitant\ belief change operator. If there exists a decreasing assignment $ \Psi\mapsto\leq_{\Psi} $ with respect to $ \circ $, then 
	we have:
	\begin{equation*}
	\modelsOf{\Psi}  \subseteq \modelsOf{\Psi\circ\alpha} \subseteq \modelsOf{\Psi} \cup \min(\modelsOf{\negOf{\alpha}},\leq_{\Psi}) 
	\tag{partial success} \label{pstl:SFApartSuccess}
	\end{equation*}
\end{proposition}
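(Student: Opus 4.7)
The plan is to prove both inclusions directly via case analyses that match the five ``transfer'' clauses \eqref{pstl:DR8}--\eqref{pstl:DR13} of a decreasing assignment. The starting observation is that strong faithfulness via \eqref{pstl:SFA1} and \eqref{pstl:SFA2} gives us $\modelsOf{\Psi} = \min(\Omega, \leq_\Psi)$ and $\modelsOf{\Psi \circ \alpha} = \min(\Omega, \leq_{\Psi \circ \alpha})$, which converts the statement into two claims about the minimal elements of the two preorders.

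For the first inclusion $\modelsOf{\Psi} \subseteq \modelsOf{\Psi \circ \alpha}$, I fix $\omega \in \modelsOf{\Psi}$ and an arbitrary $\omega'$, and show $\omega \leq_{\Psi \circ \alpha} \omega'$ by splitting on whether $\omega,\omega'$ satisfy $\alpha$ or $\negOf{\alpha}$. If both lie in $\modelsOf{\alpha}$ or both in $\modelsOf{\negOf{\alpha}}$, the preservation clauses \eqref{pstl:DR8} and \eqref{pstl:DR9} suffice. If $\omega \in \modelsOf{\negOf{\alpha}}$ and $\omega' \in \modelsOf{\alpha}$, clause \eqref{pstl:DR10} applied to the minimality $\omega \leq_\Psi \omega'$ does the job. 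The only slightly subtle case is $\omega \in \modelsOf{\alpha}$ and $\omega' \in \modelsOf{\negOf{\alpha}}$; here I invoke \eqref{pstl:DR13}, whose antecedent ``$\omega \leq_\Psi \omega_3$ for all $\omega_3$'' is precisely the minimality of $\omega$ under $\leq_\Psi$.

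For the second inclusion $\modelsOf{\Psi \circ \alpha} \subseteq \modelsOf{\Psi} \cup \min(\modelsOf{\negOf{\alpha}}, \leq_\Psi)$, I take $\omega \in \modelsOf{\Psi \circ \alpha}$, suppose $\omega \notin \modelsOf{\Psi}$, and show $\omega \in \min(\modelsOf{\negOf{\alpha}}, \leq_\Psi)$. Because $\omega \notin \modelsOf{\Psi}$, strong faithfulness \eqref{pstl:SFA2} yields $\omega^\ast <_\Psi \omega$ for any $\omega^\ast \in \modelsOf{\Psi}$. I first argue that $\omega \models \negOf{\alpha}$: if $\omega \models \alpha$, then for $\omega^\ast \in \modelsOf{\Psi}$ either $\omega^\ast \models \alpha$ (use \eqref{pstl:DR8} to transport strict order) or $\omega^\ast \models \negOf{\alpha}$ (use \eqref{pstl:DR11}); either way one obtains $\omega^\ast <_{\Psi \circ \alpha} \omega$, contradicting minimality of $\omega$ under $\leq_{\Psi \circ \alpha}$. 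Having established $\omega \models \negOf{\alpha}$, suppose for contradiction there were $\omega' \models \negOf{\alpha}$ with $\omega' <_\Psi \omega$; then \eqref{pstl:DR9} transfers the strict inequality to $\leq_{\Psi \circ \alpha}$, again contradicting minimality.

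I expect the main obstacle to be the second inclusion, specifically the step ruling out $\omega \models \alpha$ together with $\omega \notin \modelsOf{\Psi}$, because this is the one place where clauses \eqref{pstl:DR10}--\eqref{pstl:DR11} have to be read as delivering \emph{strict} order out of a strict hypothesis (using \eqref{pstl:DR11}), while the ``mixed'' case where $\omega^\ast$ and $\omega$ differ in their satisfaction of $\alpha$ needs careful bookkeeping to avoid applying the wrong clause. Everything else reduces to short case distinctions on the four possible $\alpha$/$\negOf{\alpha}$ combinations, with each case matched to exactly one of the postulates of Definition~\ref{def:decreasing_assignment}.
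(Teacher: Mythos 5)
Your proposal is correct and follows essentially the same route as the paper: the second inclusion is exactly the paper's Lemma~\ref{lem:dr_part_succ} (ruling out $\omega\models\alpha$ via \eqref{pstl:DR8}/\eqref{pstl:DR11} and then minimality via \eqref{pstl:DR9}), and the first inclusion is the case analysis the paper compresses into ``a direct consequence of Lemma~\ref{lem:dr_part_succ} and \eqref{pstl:DR13}'', with \eqref{pstl:DR8}--\eqref{pstl:DR10} handling the remaining three $\alpha$/$\negOf{\alpha}$ combinations. The only shared implicit assumption is that $\modelsOf{\Psi}$ and $\modelsOf{\Psi\circ\alpha}$ are nonempty so that faithfulness identifies them with the minimal layers, which the paper also takes for granted.
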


\section{Specific Decrement Operators}
\label{sec:stepwise_decrement_operator}
Unlike improvement operators \cite{KS_KoniecznyPinoPerez2008}, there is no unique decrement operator.
The reason for this is, that if $ \omega_2 \simeq_{\Psi} \omega_1 $ for $ \omega_1\in\modelsOf{\negOf{\alpha}} $ and $ \omega_2\in\modelsOf{\alpha} $, and it is not required otherwise by  \eqref{pstl:DR12}, 
then  the relative plausibility of $ \omega_1 $ and $ \omega_2 $ might not be changed by a \decrement\ operator $ \circ $, i.e. $ \omega_2 \simeq_{\Psi\circ\alpha} \omega_1.$
Example \ref{exmpl:decrement_freedom} demonstrates this.

\begin{table}[t]
	\begin{center}\setlength{\tabcolsep}{5pt}
		\begin{tabular}{@{}l|ll|ll|ll@{}}
			\toprule
			&    \multicolumn{2}{c}{$\Psi_1$}     & \multicolumn{2}{|c|}{$\Psi_1\circ_1 a$} &      \multicolumn{2}{c}{$\Psi_1\circ_2 a$}      \\ \midrule
			Layer 2             & $a\negOf{b}$ & $\negOf{a}\negOf{b}$ &              &                          & $a\negOf{b}$ &                                  \\
			\hdashline
			Layer 1       &              & $\negOf{a}b$         & $a\negOf{b}$ & $\negOf{a}\negOf{b}$     &              & $\negOf{a}\negOf{b}$             \\
			\hdashline
			Layer 0\ \ $ \modelsOf{\Psi} $\ \ & $ab$         &                      & $ab$         & $\negOf{a}b$             & $ab$         & \multicolumn{1}{c}{$\negOf{a}b$} \\ \bottomrule
		\end{tabular}
	\end{center}
	\caption{Example changes by two decrement operators $ \circ_1 $ and $ \circ_2 $.}\label{tab:example_stepwise}
\end{table}

\begin{example}\label{exmpl:decrement_freedom}
	Let $ \Sigma=\{ a,b \} $ and $ \Psi_1 $ be an epistemic state as given in Table \ref{tab:example_stepwise}.
	Then the change from $ \Psi_1 $ to $ \Psi_1\circ_2 a $ in Table \ref{tab:example_stepwise} is a valid change by a \decrement\  operator. 
	Likewise, the change from $ \Psi_1 $ to $ \Psi_1\circ_2 a $ from Table \ref{tab:example_stepwise} is also a valid change for a \decrement\ operator.
\end{example}

We capture this observation by two types of decrement operators.
In the first case, the \decrement\ operator improves the plausibility of a counter-model whenever it is possible.
\begin{definition}[Type-1 Decrement Operator]
A \decrement\ operator $ \stepwise $ is a \emph{type-1 \decrement\ operator} if there exists a decreasing assignment $ \Psi\mapsto\leq_{\Psi} $ with:
\begin{align*}
\ksIF \omega_1\in\modelsOf{\negOf{\alpha}} \ksAND \omega_2\in\modelsOf{\alpha}  \ksTHEN    \omega_2 \simeq_{\Psi} \omega_1 \Rightarrow \omega_1 \ll_{\Psi\stepwise\alpha} \omega_2  \tag{DR14} \label{pstl:DR14}                  
\end{align*}
\end{definition}

The second type of decrement operators keeps the order $ \omega_1\simeq_{\Psi} \omega_2 $ whenever possible.
We capture the cases when this is possible by the following notion.
If  $ \leq \subseteq \Omega\times\Omega $ is a total preorder on worlds, we say $ \omega_1 $ is \emph{frontal} with respect to $ \alpha $, if
(1.) there is no $ \omega_3\in \modelsOf{\alpha} $ such that $ \omega_3\ll \omega_1 $, and (2.) there is no $ \omega_3\in \modelsOf{\negOf{\alpha}} $ such that $ \omega_1\ll \omega_3 $.
We define the second type of decrement operators as follows.
\begin{definition}[Type-2 Decrement Operator]
	A \decrement\ operator $ \belated $ is a \emph{type-2 \decrement\ operator} if there exists a decreasing assignment $ \Psi\mapsto\leq_{\Psi} $ with:
	\begin{align*}
	\ksIF \omega_1\in\modelsOf{\negOf{\alpha}},\ \omega_2\in\modelsOf{\alpha} & \ksAND \omega_1 \text{ is frontal w.r.t } \alpha \ksTHEN    \omega_2 \simeq_{\Psi} \omega_1 \Rightarrow \omega_2 \simeq_{\Psi\belated\alpha} \omega_1   \tag{DR15} \label{pstl:DR15}                  
	\end{align*}
\end{definition}
\begingroup \noindent\textit{Example\  (continuation of Example \ref{exmpl:decrement_freedom})}.
	The change from $ \Psi_1 $ to $ \Psi_1\circ_1 a $ in Table \ref{tab:example_stepwise} can be made by a type-1 \decrement\ operator, but not by a type-2 \decrement\ operator. 
	Conversely, the change from $ \Psi_1 $ to $ \Psi_1\circ_2 a $ from Table \ref{tab:example_stepwise} can be made by a type-2 \decrement\ operator, but not by a type-1 \decrement\ operator
\endgroup

\section{Discussion and Future Work}
\label{sec:conclusion}
We provide postulates and representation theorems for gradual variants of AGM contractions in the Darwich-Pearl framework of epistemic states.
These so-called weak \decrement\ operators are a generalisation of AGM contraction for epistemic states.
Additionally, we give postulates for intended iterative behaviour of these operators, forming the class of \decrement\ operators.
For both classes of operators we presented a representation theorem in the framework of total preorders.
For the definition of the postulates, the new relation $ \preceq_\Psi^\circ $ (see Definition \ref{def:giveuprelation}) is introduced.
While $ \preceq_\Psi^\circ $ is related to epistemic entrenchment \cite{KS_Gaerdenfors1988}, it can be shown that $ \preceq_\Psi^\circ $ is not an epistemic entrenchment.
The exploration of the exact nature of $ \preceq_\Psi^\circ $ remains an open task.

The next natural step will be to investigate the interrelation between (weak) \decrement\ operators and (weak) improvement operators.
One approach is to generalize the Levi identity \cite{KS_Levi1977} and Haper identity \cite{KS_Harper1976} to these operators.
Another approach could be the direct definition of a contraction operator from improvement operators, as suggested by Konieczny and {Pino P{\'{e}}rez} \cite{KS_KoniecznyPinoPerez2008}.
For such operators, after achieving success,
a next improvement may make certain models
unplausible, while a decrement operator keeps the plausibility.
While this already indicated a difference between the operators, the study of their
specific interrelationship is part of future~work.
Another goal for future work is to generalize (weak) \decrement\ operators to a more general class of gradual change operators \cite{KS_Sauerwald2019}.
Such operators are candidates for a formalisation of psychologically inspired forgetting operations.
An immediate target towards this goal is to take a closer look at subclasses and interrelate them with the taxonomy of improvement operators \cite{KS_KoniecznyGrespanPinoPerez2010}.

\smallskip

\noindent\textbf{Acknowledgements:}
We thank the 
	reviewers for their valuable hints and comments that helped us to improve the paper and we thank Gabriele Kern-Isberner for fruitful discussions and her encouragement to follow the line of research leading to this paper.
This work was supported by DFG Grant BE 1700/9-1 given to
Christoph Beierle as part of the 
priority 
program "Intentional Forgetting in Organizations" (SPP 1921). Kai Sauerwald is supported by this Grant.

%
\bibliographystyle{plain}
\bibliography{bibexport}

\clearpage
\appendix
\pagenumbering{arabic}
\renewcommand{\thepage}{A.\arabic{page}}
\section{Proofs}
This appendix contains full proofs for the two representation theorems and for Proposition \ref{prop:decreasingassign_part_succ}.
These proofs rely on three lemmata which are also proven here.
\sloppy
\begin{lemma}\label{lem:contract_world}
	Let $ \circ $ an operator satisfying \eqref{pstl:D1} to \eqref{pstl:D4} and $ \omega\in\Omega $, then:
	\begin{equation*}
	\modelsOf{\Psi \bullet \negOf{\omega}} =  \modelsOf{\Psi} \cup\{\omega\}
	\end{equation*}
\end{lemma}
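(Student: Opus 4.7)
The plan is to split the argument into two cases according to whether $\omega$ already satisfies $\beliefsOf{\Psi}$. The heart of the lemma is to combine the success behaviour guaranteed by \eqref{pstl:D3} with the recovery property \eqref{pstl:D4}, since $\negOf{\omega}$ has the very clean semantics $\modelsOf{\negOf{\omega}} = \Omega \setminus \{\omega\}$ (as $\omega$ is a complete conjunction). All other postulates only enter to pin down one of the two inclusions.

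First I would handle the trivial case $\omega \in \modelsOf{\Psi}$. Here $\negOf{\omega} \notin \beliefsOf{\Psi}$, and since $\negOf{\omega} \not\equiv \top$ the definition of $\bullet$ picks the smallest $n$ with $\negOf{\omega} \notin \beliefsOf{\Psi \circ^n \negOf{\omega}}$, which is $n = 0$, so $\Psi \bullet \negOf{\omega} = \Psi$. The claim collapses to $\modelsOf{\Psi} = \modelsOf{\Psi} \cup \{\omega\}$, which holds by assumption.

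For the main case $\omega \notin \modelsOf{\Psi}$ I would establish the two inclusions separately. For $\modelsOf{\Psi} \cup \{\omega\} \subseteq \modelsOf{\Psi \bullet \negOf{\omega}}$, the part $\modelsOf{\Psi} \subseteq \modelsOf{\Psi \bullet \negOf{\omega}}$ is the semantic reading of \eqref{pstl:D1}, and $\omega \in \modelsOf{\Psi \bullet \negOf{\omega}}$ follows from \eqref{pstl:D3}: the hesitance postulate together with the definition of $\bullet$ yields $\negOf{\omega} \notin \beliefsOf{\Psi \bullet \negOf{\omega}}$, so there must exist a model of $\Psi \bullet \negOf{\omega}$ outside $\modelsOf{\negOf{\omega}} = \Omega \setminus \{\omega\}$, and that model can only be $\omega$ itself. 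The reverse inclusion $\modelsOf{\Psi \bullet \negOf{\omega}} \subseteq \modelsOf{\Psi} \cup \{\omega\}$ is the semantic reading of recovery \eqref{pstl:D4}: from $\beliefsOf{\Psi} \subseteq Cn(\beliefsOf{\Psi\bullet\negOf{\omega}} \cup \{\negOf{\omega}\})$ one obtains $\modelsOf{\Psi\bullet\negOf{\omega}} \cap \modelsOf{\negOf{\omega}} \subseteq \modelsOf{\Psi}$, hence any world in $\modelsOf{\Psi\bullet\negOf{\omega}}$ other than $\omega$ lies in $\modelsOf{\Psi}$.

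The only genuine subtlety is making sure that the translations between syntactic postulates and their semantic counterparts are spelled out carefully, in particular that the recovery postulate indeed gives the set-inclusion I used. I do not expect any real obstacle beyond that; postulates \eqref{pstl:D2} are not needed for this lemma, and no appeal to \eqref{pstl:D5}--\eqref{pstl:D7} or to a faithful assignment is required.
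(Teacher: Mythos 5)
Your proof is correct: the case split on whether $\omega\in\modelsOf{\Psi}$, together with the semantic reading of \eqref{pstl:D1}, the success of $\bullet$ obtained from \eqref{pstl:D3} and the definition of $\bullet$ (using that $\negOf{\omega}\not\equiv\top$ and that belief sets are deductively closed, so $\negOf{\omega}\notin\beliefsOf{\Psi\bullet\negOf{\omega}}$ forces $\omega\in\modelsOf{\Psi\bullet\negOf{\omega}}$), and the recovery postulate \eqref{pstl:D4} read as $\modelsOf{\Psi\bullet\negOf{\omega}}\cap\modelsOf{\negOf{\omega}}\subseteq\modelsOf{\Psi}$, yields both inclusions, and your remark that \eqref{pstl:D2} is not actually needed is accurate. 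The paper gives no details for this lemma, deferring instead to Lemma 13 of Caridroit et al.\ for the contraction analogue; your write-up is precisely the natural adaptation of that standard argument (with instantaneous success replaced by hesitance plus the definition of $\bullet$), so it matches the intended approach while being self-contained.
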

\begin{proof}
	The proof is analogue to a proof by Caridroit et. al \cite[Lem 13.]{KS_CaridroitKoniecznyMarquis2017}.\qed
\end{proof}

\setcounter{theorem}{0}
\begin{theorem}[Representation Theorem: Weak Decrement Operators]
Let $ \circ $  be a belief change operator. Then the following items are equivalent:
\begin{enumerate}[(a)]
	\item $ \circ $ is a weak \decrement\ operator 
	\item there exists a strong faithful assignment $ \Psi \mapsto \leq_\Psi $ with respect to $ \circ $ such that:
	\begin{align*}
	\text{there exists } n\in\naturals_0  &\text{ such that } \modelsOf{\Psi \circ^n \alpha} = \modelsOf{\Psi} \cup \min(\modelsOf{\negOf{\alpha}}, \leq_{\Psi} ) \notag \\
	& \ksAND n \text{ is the smallest integer such that } \modelsOf{\Psi \circ^n \alpha} \not\subseteq \modelsOf{\alpha}   \tag{decrement success} 
	\end{align*}
\end{enumerate}
\end{theorem}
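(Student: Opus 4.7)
The plan is to reduce the theorem to the existing representation of AGM contraction for epistemic states (Proposition~\ref{prop:es_contraction}) by working with the induced limit operator $\bullet$, and then to separately transfer the syntax-independence postulate \eqref{pstl:D5} into the strong-faithful condition \eqref{pstl:SFA3}.

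For the direction (b)~$\Rightarrow$~(a), I would fix a strong faithful assignment $\Psi\mapsto\leq_\Psi$ with respect to $\circ$ satisfying \eqref{pstl:SFAdis}, and verify each of \eqref{pstl:D1}--\eqref{pstl:D7} directly from the identity $\modelsOf{\Psi\bullet\alpha}=\modelsOf{\Psi}\cup\min(\modelsOf{\negOf{\alpha}},\leq_{\Psi})$. The inclusion $\modelsOf{\Psi}\subseteq\modelsOf{\Psi\bullet\alpha}$ yields \eqref{pstl:D1}; vacuity \eqref{pstl:D2} follows because $\alpha\notin\beliefsOf{\Psi}$ forces $\modelsOf{\Psi}\cap\modelsOf{\negOf{\alpha}}\neq\emptyset$, so by \eqref{pstl:SFA1} and \eqref{pstl:SFA2} the set $\min(\modelsOf{\negOf{\alpha}},\leq_\Psi)$ sits inside $\modelsOf{\Psi}$; \eqref{pstl:D3} is precisely the existence clause of \eqref{pstl:SFAdis}; \eqref{pstl:D4} is the classical recovery argument; \eqref{pstl:D5} follows from \eqref{pstl:SFA3} combined with faithfulness, since $\modelsOf{\Psi}=\min(\Omega,\leq_\Psi)$ is read off from the order; and \eqref{pstl:D6}--\eqref{pstl:D7} reduce to the standard semantic arguments used for \eqref{pstl:C-6}--\eqref{pstl:C-7}.

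For the direction (a)~$\Rightarrow$~(b), I would first show that the operator $\bullet$ induced by a weak \decrement\ operator $\circ$ satisfies \eqref{pstl:C-1}--\eqref{pstl:C-7}. Postulates \eqref{pstl:C-1}--\eqref{pstl:C-2}, \eqref{pstl:C-4}, and \eqref{pstl:C-6}--\eqref{pstl:C-7} coincide with \eqref{pstl:D1}--\eqref{pstl:D2}, \eqref{pstl:D4}, and \eqref{pstl:D6}--\eqref{pstl:D7}; \eqref{pstl:C-3} is the success conclusion guaranteed by the minimality of $n$ in the definition of $\bullet$ from \eqref{pstl:D3}; and \eqref{pstl:C-5} is obtained by applying \eqref{pstl:D5} to the constant sequences $\alpha,\alpha,\ldots$ and $\beta,\beta,\ldots$, which forces the smallest successful $n$ to coincide for equivalent inputs and matches the corresponding belief sets. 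Applying Proposition~\ref{prop:es_contraction} to $\bullet$ then delivers a faithful assignment $\Psi\mapsto\leq_{\Psi}$ satisfying \eqref{pstl:SFAdis}. To upgrade faithfulness to strong faithfulness, only \eqref{pstl:SFA3} remains: I would argue that \eqref{pstl:SFAdis} together with Lemma~\ref{lem:contract_world} lets one reconstruct $\leq_{\Psi'}$ pairwise from the belief sets of the form $\beliefsOf{\Psi'\bullet\negOf{(\omega_1\lor\omega_2)}}$, and that this family is invariant under replacing a prefix by an equivalent one by \eqref{pstl:D5}, so the two orders at $\Psi\circ\alpha_1\circ\cdots\circ\alpha_n$ and $\Psi\circ\beta_1\circ\cdots\circ\beta_n$ must agree.

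The main obstacle I expect is the last step: giving a clean argument that the preorder produced by Proposition~\ref{prop:es_contraction} is intrinsically determined by the iterated belief-set behaviour of $\circ$, because only this lets us translate the purely syntactic invariance supplied by \eqref{pstl:D5} into the semantic identity \eqref{pstl:SFA3}. Additional care is required in the bookkeeping between the single-step $\circ$ (as it appears in \eqref{pstl:SFA3} and \eqref{pstl:D5}) and the limit operator $\bullet$ (as it appears in \eqref{pstl:SFAdis}), in particular to ensure that the minimal $n$ in \eqref{pstl:SFAdis} is well-defined and stable under the equivalences witnessed by \eqref{pstl:D5}.
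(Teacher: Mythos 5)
Your proposal is sound, and the (b)$\Rightarrow$(a) half matches the paper's proof almost verbatim (direct verification of \eqref{pstl:D1}--\eqref{pstl:D7} from the representation, with the same observations for vacuity and recovery). The (a)$\Rightarrow$(b) half, however, takes a genuinely different route. The paper does not invoke Proposition~\ref{prop:es_contraction} at all: it defines the canonical order explicitly by $\omega_1 \leq_\Psi \omega_2$ iff $\omega_1\in\modelsOf{\Psi\bullet\negOf{(\omega_1\lor\omega_2)}}$ and then proves from scratch, using \eqref{pstl:hesitant_success}, \eqref{pstl:D1}, \eqref{pstl:D5}, \eqref{pstl:D7} and Lemma~\ref{lem:contract_world}, that this is a total preorder, that it is a strong faithful assignment (where \eqref{pstl:SFA3} is immediate because the order is \emph{defined} from belief sets that \eqref{pstl:D5} makes invariant), and that \eqref{pstl:SFAdis} holds via a case split on $\alpha\equiv\top$, $\alpha\notin\beliefsOf{\Psi}$ and $\alpha\in\beliefsOf{\Psi}$. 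You instead transfer \eqref{pstl:C-1}--\eqref{pstl:C-7} to $\bullet$ (your handling of \eqref{pstl:C-5} via constant sequences and of \eqref{pstl:C-3} via the minimality of $n$ is exactly right), cite Proposition~\ref{prop:es_contraction}, and then recover \eqref{pstl:SFA3} by a determinacy argument: any faithful assignment satisfying the contraction equation is pairwise reconstructible from the sets $\modelsOf{\Psi'\bullet\negOf{(\omega_1\lor\omega_2)}}$ together with $\modelsOf{\Psi'}$, and \eqref{pstl:D5} makes this data invariant under equivalent prefixes (including the stability of the minimal $n$, which you rightly flag). This works and is shorter on paper, since it outsources the total-preorder and faithfulness checks to the cited result; what the paper's self-contained construction buys in exchange is that the explicit canonical order and its properties are reused directly in the proof of Theorem~\ref{thm:disimprovement}, where the postulates \eqref{pstl:DR8}--\eqref{pstl:DR13} must be verified against that very same order, so the uniqueness step you identify as the main obstacle is essentially the content the paper proves anyway when establishing totality and transitivity.
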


\begin{proof}
	We proof the theorem under the assumption that the signature has more than 2 elements, i.e. $ |\Sigma|>2 $.
	For the (a) to (b)-direction, $ \circ $ is an \hesitant\ contraction operator, and the corresponding operator $ \bullet $ is defined. We define the total preorder  $ \leq_\Psi $ as follows:
	\begin{equation*}
	\omega_1 \leq_\Psi  \omega_2 \text{ iff } \omega_1\in\modelsOf{\Psi\bullet \negOf{(\omega_1\lor\omega_2)}}
	\end{equation*}
	We show that $ \leq_{\Psi} $ is a total preorder:
	\begin{description}
		\item[Totality] Let $ \omega_1,\omega_2\in\Omega $. By definition $ \modelsOf{(\omega_1\lor\omega_2)}=\{\omega_1,\omega_2\} $, and therefore $ \negOf{(\omega_1\lor\omega_2)} $ has at least one model and $ \negOf{(\omega_1\lor\omega_2)} \not\equiv \top $. By \eqref{pstl:hesitant_success} there is an $ n $ (and we choose here the smallest) such that $ \negOf{(\omega_1\lor\omega_2)} \notin \beliefsOf{\Psi \bullet \negOf{(\omega_1\lor\omega_2)}} $. Therefore, either $ \omega_1\in\modelsOf{\Psi \bullet \negOf{(\omega_1\lor\omega_2)}} $ or $ \omega_2 \in \modelsOf{\Psi \bullet \negOf{(\omega_1\lor\omega_2)}} $.
		\item[Reflexivity] Follows from totality.
		\item[Transitivity] Let $ \omega_1,\omega_2,\omega_3\in\Omega $ such that $ \omega_1 \leq_{\Psi} \omega_2 $ and $ \omega_2 \leq_{\Psi} \omega_3 $. We differentiate by case:
		\begin{itemize}
			\item If $ \omega_1,\omega_2,\omega_3 $ are not pairwise distinct, then transitivity is easily fulfilled (since $ \leq_{\Psi} $ is reflexive).
			\item Assume that $ \omega_1,\omega_2,\omega_3 $ are pairwise distinct and for at least one $ 1\leq i\leq 3 $ we have $ \omega_i\in\modelsOf{\Psi} $. Then in each case it is easy to see that $ \omega_1\in\modelsOf{\Psi} $ and thus, by \eqref{pstl:D1}, for all $ \alpha $ it follows $ \omega_1\in \modelsOf{\Psi\circ \alpha} $. 
			\item Assume that $ \omega_1,\omega_2,\omega_3 $ are pairwise distinct and $ \omega_1,\omega_2,\omega_3\notin \modelsOf{\Psi} $.
			Towards a contradiction, assume that $ \omega_1 \not\leq_{\Psi} \omega_3 $.  
			By assumption of $ \omega_1 \not\leq_{\Psi} \omega_3 $  we have $ \omega_1\notin \modelsOf{\Psi \bullet \negOf{(\omega_1\lor\omega_3)} } $. By Lemma \ref{lem:contract_world} we have $ \omega_1\in \modelsOf{\Psi}\cup\{\omega_1\} =\modelsOf{\Psi\bullet\neg\omega_1} $. From \eqref{pstl:D7} we get $ \modelsOf{\Psi\bullet\neg\omega_1} \subseteq \modelsOf{\Psi \bullet (\negOf{\omega_1} \land \negOf{\omega_3})} $ and by \eqref{pstl:D5} we have $ \modelsOf{\Psi \bullet (\negOf{\omega_1} \land \negOf{\omega_3})} = \modelsOf{\Psi \bullet \negOf{(\omega_1\lor\omega_3)}} $, a contradiction, since $ \omega_1\in \modelsOf{\Psi \bullet \negOf{(\omega_1\lor\omega_3)}} $ and $ \omega_1\notin \modelsOf{\Psi \bullet \negOf{(\omega_1\lor\omega_3)}} $.
		\end{itemize}
	\end{description}
	We show that $ \leq_{\Psi} $ is a strong faithful assignment with respect to $ \circ $.
	\begin{description}
		\item[\eqref{pstl:SFA1}] Let $ \omega_1,\omega_2\in\modelsOf{\Psi} $. Then by \eqref{pstl:D1} we have $ \omega_1,\omega_2\in \modelsOf{\Psi\bullet \negOf{(\omega_1\lor\omega_2)}} $. Therefore by definition of $ \leq_{\Psi} $ we have $ \omega_1 \simeq_\Psi \omega_2 $.
		\item[\eqref{pstl:SFA2}] Let $ \omega_1\in\modelsOf{\Psi} $ and $ \omega_2\notin \modelsOf{\Psi} $. Then by \eqref{pstl:D1} we have $ \omega_1\in \modelsOf{\Psi\bullet \negOf{(\omega_1\lor\omega_2)}} $. 
		Towards a contradiction assume $ \omega_2 \in \modelsOf{\Psi\bullet \negOf{(\omega_1\lor\omega_2)}} $. Since $ \omega_1\neq \omega_2 $, by $ \omega_1\in \modelsOf{\Psi} $ we know that $ \Psi\not\models\negOf{(\omega_1\lor\omega_2)} $. Thus, by \eqref{pstl:D2} we have $ \modelsOf{\Psi \bullet \negOf{(\omega_1\lor\omega_2)} } \subseteq \modelsOf{\Psi} $.
		\item[\eqref{pstl:SFA3}] Follows directly from \eqref{pstl:D5}.
	\end{description}
	We show that \eqref{pstl:SFAdis} is fulfilled. We differentiate by case:
	\begin{itemize}
		\item Case with $ \alpha\equiv\top $. Then $ \modelsOf{\negOf{\alpha}}=\emptyset $ and by definition of $ \bullet $ we have $ \Psi=\Psi\bullet\alpha $, especially $ \modelsOf{\Psi} = \modelsOf{\Psi\bullet\alpha} = \modelsOf{\Psi}\cup \modelsOf{\negOf{\alpha}} $.
		\item Case with $ \alpha\notin\beliefsOf{\Psi} $. Then by \eqref{pstl:D1} and \eqref{pstl:D2} we have $ \modelsOf{\Psi} = \modelsOf{\Psi\bullet\alpha} $, resp. $ \beliefsOf{\Psi} = \beliefsOf{\Psi\bullet\alpha} $. Then there is an $ \omega\in \modelsOf{\Psi} $ such that $ \omega\not\models\alpha $, thus, we have $ \min(\modelsOf{\negOf{\alpha}},\leq_{\Psi}) \subseteq \modelsOf{\Psi} $. We conclude  $ \modelsOf{\Psi\bullet\alpha} = \modelsOf{\Psi} \cup \min(\modelsOf{\negOf{\alpha}},\leq_{\Psi}) $.
		\item Case with $ \alpha\in\beliefsOf{\Psi} $. Then by \eqref{pstl:D1} we have $ \modelsOf{\Psi}\subseteq \modelsOf{\Psi\bullet\alpha} $. We show that every $ \omega\in \modelsOf{\Psi\bullet\alpha}\setminus \modelsOf{\Psi} $ is an element of the set $ \min(\modelsOf{\negOf{\alpha}},\leq_{\Psi}) $. 
		
		First, by \eqref{pstl:D4} we have $ \modelsOf{\Psi\bullet\alpha}\cap\modelsOf{\alpha} \subseteq   \modelsOf{\Psi} $. Then every $ \omega\in\modelsOf{\alpha} $ which is an element of $ \modelsOf{\Psi\bullet\alpha}\setminus\modelsOf{\Psi} $ leads to a violation of \eqref{pstl:D4}. Thus, we observe that every $ \omega\in \modelsOf{\Psi\bullet\alpha}\setminus \modelsOf{\Psi}  $ is an element of $ \modelsOf{\negOf{\alpha}} $.
		
		Second, towards a contradiction suppose $ \omega\in \modelsOf{\Psi\bullet\alpha}\setminus \modelsOf{\Psi} $ such that $ \omega\notin \min(\modelsOf{\negOf{\alpha}},\leq_{\Psi}) $. Let $ \omega' \in \min(\modelsOf{\negOf{\alpha}},\leq_{\Psi})  $, and therefore $ \omega' <_{\Psi} \omega $. By definition we have $ \omega' \in \modelsOf{\Psi\bullet \negOf{(\omega'\lor\omega)}}  $ and $ \omega \notin \modelsOf{\Psi\bullet \negOf{(\omega'\lor\omega)}}  $. By \eqref{pstl:D5} we have $ \modelsOf{\Psi\bullet \negOf{(\omega'\lor\omega)}}=\modelsOf{\Psi\bullet (\negOf{\omega'}\land\negOf{\omega})}$.
		Then by \eqref{pstl:D7} and by Lemma \ref{lem:contract_world} we conclude $ \modelsOf{\Psi}\cup\{ \omega \} \subseteq \modelsOf{\Psi\bullet (\negOf{\omega'}\land\negOf{\omega})} $.
		This shows $ \modelsOf{\Psi\bullet\alpha} \subseteq \modelsOf{\Psi} \cup \min(\modelsOf{\negOf{\alpha}},\leq_{\Psi})  $.
		
		Suppose $ \omega$ is an element of $ \min(\modelsOf{\negOf{\alpha}},\leq_{\Psi}) $ such that $ \omega\notin\modelsOf{\Psi\bullet\alpha} $. Without loss of generality we can assume $ \alpha\not\equiv\top $; thus, there exists at least one $ \omega'\in \min(\modelsOf{\negOf{\alpha}},\leq_{\Psi})$ such that $ \omega'\in\modelsOf{\Psi\bullet\alpha} $. By definition of $ \leq_{\Psi} $ we have $ \omega\in \modelsOf{\Psi\bullet \negOf{(\omega\lor\omega')} } $.
		Clearly $ \neg\alpha=\gamma \lor \omega \lor \omega' $, and thus, $ \alpha \equiv \negOf{\gamma} \land \negOf{(\omega \lor \omega')} $. Since $ \omega'\in \modelsOf{\Psi\bullet\alpha} $, we have $ \Psi\bullet \alpha\not\models \negOf{(\omega \lor \omega')} $. Therefore from \eqref{pstl:D7} we conclude $ \modelsOf{\Psi \bullet  \negOf{(\omega \lor \omega')}} \subseteq \modelsOf{\Psi\bullet\alpha} $ and thus the contradiction $ \omega\in\modelsOf{\Psi\bullet\alpha} $.
		This completes the proof of $ \modelsOf{\Psi\bullet\alpha} = \modelsOf{\Psi} \cup \min(\modelsOf{\negOf{\alpha}},\leq_{\Psi}) $.
	\end{itemize}
	
	For the (b) to (a)-direction let $ \circ $ be a belief change operator and $ \leq_{\Psi} $ a strong faithful assignment with respect to $ \circ $ such that \eqref{pstl:SFAdis} is fulfilled.
	\begin{description}
		\item[\eqref{pstl:D3}] For $ \Psi $ and $ \alpha $ let $ n^\Psi_\alpha $ be the smallest integer such  that $ \modelsOf{\Psi \circ^{n^\Psi_\alpha} \alpha} = \modelsOf{\Psi} \cup \min(\modelsOf{\negOf{\alpha}}, \leq_{\Psi} ) $. By \eqref{pstl:SFAdis} the existence of $ n^\Psi_\alpha $ guaranteed. For $ \alpha\not\equiv\top $, then $ \alpha\notin\beliefsOf{\Psi\circ^{n^\Psi_\alpha} \alpha} $ and therefore $ \circ $ is a \hesitant\ contraction operator.
	\end{description}
	Since $ \circ $ satisfies \eqref{pstl:D3} the corresponding operator $ \bullet $ is defined.
	\begin{description}
		\item[\eqref{pstl:D1}] Follows directly by \eqref{pstl:SFAdis}.
		\item[\eqref{pstl:D2}] Suppose $ \Psi\not\models\alpha $. Then, $ \min(\modelsOf{\negOf{\alpha}},\leq_{\Psi}) $ is non-empty and by \eqref{pstl:SFA1} and \eqref{pstl:SFA2} we have $ \min(\modelsOf{\negOf{\alpha}},\leq_{\Psi}) \subseteq \modelsOf{\Psi} $.
		\item[\eqref{pstl:D4}] Let $ \gamma \in \beliefsOf{\Psi} $ and therefore $ \modelsOf{\Psi} \subseteq \modelsOf{\gamma} $. 
			Then $ \gamma\in Cn(\beliefsOf{\Psi\bullet\alpha}\cup \{ \alpha \}) $ if and only if $ \modelsOf{\Psi\bullet\alpha}\cap\modelsOf{\alpha} \subseteq \modelsOf{\gamma} $.
		By \eqref{pstl:SFAdis} we conclude $ \modelsOf{\Psi\bullet\alpha}\cap\modelsOf{\alpha} = (\modelsOf{\Psi} \cup \min(\modelsOf{\negOf{\alpha}},\leq_{\Psi})) \cap \modelsOf{\alpha} = \modelsOf{\Psi}\setminus \modelsOf{\negOf{\alpha}}  $.
		Clearly, $ \modelsOf{\Psi}\setminus \modelsOf{\negOf{\alpha}} \subseteq \modelsOf{\Psi} \subseteq \modelsOf{\gamma}  $.
		\item[\eqref{pstl:D5}] Follows by \eqref{pstl:SFA3}.
		\item[\eqref{pstl:D6}] By \eqref{pstl:SFAdis} we have $ \modelsOf{\Psi \circ (\alpha\land\beta)} = \modelsOf{\Psi} \cup \min(\modelsOf{\negOf{\alpha}} \cup \modelsOf{\negOf{\beta}},\leq_{\Psi}) $ and we have $ \modelsOf{\Psi \circ \alpha} \cup \modelsOf{\Psi \circ \beta} = \modelsOf{\Psi} \cup \min(\modelsOf{\negOf{\alpha}},\leq_{\Psi}) \cup \min( \modelsOf{\negOf{\beta}},\leq_{\Psi}) $.
		Furthermore, it holds that $ \min(\modelsOf{\negOf{\alpha}} \cup \modelsOf{\negOf{\beta}},\leq_{\Psi}) \subseteq \min(\modelsOf{\negOf{\alpha}},\leq_{\Psi}) \cup \min( \modelsOf{\negOf{\beta}},\leq_{\Psi})  $ and therefore, we have:
		\begin{equation*}
		\modelsOf{\Psi \circ (\alpha\land\beta)}  \subseteq  \modelsOf{\Psi} \cup  \min(\modelsOf{\negOf{\alpha}},\leq_{\Psi}) \cup \min( \modelsOf{\negOf{\beta}},\leq_{\Psi}) = \modelsOf{\Psi \circ \alpha} \cup \modelsOf{\Psi \circ \beta}
		\end{equation*}
		\item[\eqref{pstl:D7}] Assume $ \Psi\bullet\alpha\beta \not\models\beta $. Then by \eqref{pstl:SFAdis} and \eqref{pstl:SFA3} we have $ \modelsOf{\Psi\bullet\alpha\beta}= \modelsOf{\Psi} \cup \min(\modelsOf{\negOf{\alpha}\lor\negOf{\beta}},\leq_{\Psi}) \not\subseteq \modelsOf{\beta}  $. 
		This implies that $ {\min(\modelsOf{\negOf{\beta}},\leq_{\Psi})}  \subseteq  \min(\modelsOf{\negOf{\alpha}\lor\negOf{\beta}},\leq_{\Psi})  $.
		By basic set theory we get $ {\modelsOf{\Psi} \cup \min(\modelsOf{\negOf{\beta}},\leq_{\Psi})}  \subseteq \modelsOf{\Psi} \cup \min(\modelsOf{\negOf{\alpha}\lor\negOf{\beta}},\leq_{\Psi}) $.
		By \eqref{pstl:SFAdis} this is equivalent to $ \modelsOf{\Psi\bullet\beta} \subseteq \modelsOf{\Psi\bullet\alpha\beta} $ 
	\end{description}
 In summary, the operator $ \circ $ is an weak \decrement\ operator. \qed
\end{proof}

\begin{lemma}\label{lem:dr_part_succ}
Let $ \circ $ be a belief change operator. If there exists a strong faithful assignment $ \Psi\mapsto\leq_{\Psi} $ with respect to $ \circ $ which satisfies \eqref{pstl:DR8}, \eqref{pstl:DR9} and \eqref{pstl:DR11}, then for every $ \Psi $ and $ \alpha\in\propLang $ we have:
\begin{equation*}
\modelsOf{\Psi\circ\alpha} \subseteq \modelsOf{\Psi} \cup \min(\modelsOf{\negOf{\alpha}},\leq_{\Psi}) 
\end{equation*}
\end{lemma}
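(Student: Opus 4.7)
The plan is to reason pointwise. I take an arbitrary $\omega \in \modelsOf{\Psi\circ\alpha}$ and show that either $\omega \in \modelsOf{\Psi}$ or $\omega$ lies in $\min(\modelsOf{\negOf{\alpha}}, \leq_\Psi)$. First I apply the faithfulness conditions \eqref{pstl:SFA1} and \eqref{pstl:SFA2} to $\leq_{\Psi\circ\alpha}$ to observe that $\modelsOf{\Psi\circ\alpha}$ coincides with the bottom layer of $\leq_{\Psi\circ\alpha}$, so $\omega \leq_{\Psi\circ\alpha} \omega'$ for every $\omega' \in \Omega$. Then I split according to whether $\omega \models \alpha$ or $\omega \models \negOf{\alpha}$.

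In the first case, with $\omega \models \alpha$, I argue by contradiction: suppose $\omega \notin \modelsOf{\Psi}$. For any $\omega' \in \modelsOf{\Psi}$, faithfulness of $\leq_\Psi$ gives $\omega' <_\Psi \omega$ via \eqref{pstl:SFA2}. If $\omega' \models \alpha$, I transport this strict inequality to $\omega' <_{\Psi\circ\alpha} \omega$ using \eqref{pstl:DR8}, contradicting the minimality of $\omega$ in $\leq_{\Psi\circ\alpha}$. Otherwise $\omega' \models \negOf{\alpha}$, and \eqref{pstl:DR11} delivers the same contradiction $\omega' <_{\Psi\circ\alpha} \omega$. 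Hence $\omega \in \modelsOf{\Psi}$ in this case.

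In the second case, with $\omega \models \negOf{\alpha}$ and $\omega \notin \modelsOf{\Psi}$ (the subcase $\omega \in \modelsOf{\Psi}$ being trivial), I show that $\omega$ is minimal in $\modelsOf{\negOf{\alpha}}$ under $\leq_\Psi$. Fix any $\omega' \in \modelsOf{\negOf{\alpha}}$. Minimality of $\omega$ in $\leq_{\Psi\circ\alpha}$ yields $\omega \leq_{\Psi\circ\alpha} \omega'$, and \eqref{pstl:DR9}, which transports the order restricted to $\modelsOf{\negOf{\alpha}}$ both ways between $\leq_\Psi$ and $\leq_{\Psi\circ\alpha}$, converts this to $\omega \leq_\Psi \omega'$. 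This gives $\omega \in \min(\modelsOf{\negOf{\alpha}}, \leq_\Psi)$, as required.

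I do not expect a serious obstacle here; the argument is a short semantic inspection of the bottom layer of $\leq_{\Psi\circ\alpha}$ under the three available postulates. The only subtlety is noticing that the first case requires splitting on whether the reference witness $\omega' \in \modelsOf{\Psi}$ lies in $\modelsOf{\alpha}$ or $\modelsOf{\negOf{\alpha}}$, so that either \eqref{pstl:DR8} or \eqref{pstl:DR11} applies. In particular, \eqref{pstl:DR10} is not needed, because each subcase already yields a strict inequality that directly contradicts the minimality of $\omega$.
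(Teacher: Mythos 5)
Your proof is correct and follows essentially the same route as the paper's: use faithfulness to place $\omega\in\modelsOf{\Psi\circ\alpha}$ in the bottom layer of $\leq_{\Psi\circ\alpha}$, rule out $\omega\models\alpha$ with $\omega\notin\modelsOf{\Psi}$ by transporting a strict inequality from a witness in $\modelsOf{\Psi}$ via \eqref{pstl:DR8} or \eqref{pstl:DR11}, and then use \eqref{pstl:DR9} to pull minimality among the counter-models of $\alpha$ back to $\leq_\Psi$. The only (shared, harmless) implicit assumption is that $\modelsOf{\Psi}\neq\emptyset$, which the paper also invokes.
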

\begin{proof}
	Let $ \omega\in \modelsOf{\Psi\circ\alpha} $. If $ \omega\in\modelsOf{\Psi} $ we are done, so it remains to show that $ \omega \in \min(\modelsOf{\negOf{\alpha}},\leq_{\Psi})  $ in the case of $ \omega\notin \modelsOf{\Psi} $.
	
	We first show that if $ \omega\notin \modelsOf{\Psi} $, then $ \omega\in\modelsOf{\negOf{\alpha}} $.
	Towards a contradiction suppose this is not the case, i.e. $ \omega\notin \modelsOf{\Psi} $  and $ \omega\in\modelsOf{\alpha} $.
	Then there a two cases:
	1. There exists $ \omega'\in\modelsOf{\alpha} $ such that $ \omega' \in \modelsOf{\Psi} $. 
	We easy conclude that $ \omega' <_\Psi \omega $ and thus, by \eqref{pstl:DR8}, we have $ \omega' <_{\Psi\circ\alpha} \omega $. 
	Due to the faithfulness of the assignment $ \omega \notin \modelsOf{\Psi\circ\alpha} $, which is a contradiction.
	2. For all $ \omega'\in\modelsOf{\alpha} $ we have $ \omega'\notin\modelsOf{\Psi} $. 
	Then, by using $ \modelsOf{\Psi}\neq\emptyset $, for all $ \omega''\in\modelsOf{\Psi} $ we must have $ \omega''\in\modelsOf{\negOf{\Psi}} $. Thus, $ \omega'' <_\Psi \omega $ and from \eqref{pstl:DR11} we get $ \omega''<_{\Psi\circ\alpha} \omega $.
	Again, due to the faithfulness of the assignment, we have $ \omega \notin \modelsOf{\Psi\circ\alpha} $, which is a contradiction.
	So every $ \omega\in\modelsOf{\Psi\circ\alpha}\setminus\modelsOf{\Psi} $ is an element of $ \omega\in\modelsOf{\negOf{\alpha}} $.
	
	Now we show that every $ \omega\in\modelsOf{\Psi\circ\alpha}\setminus\modelsOf{\Psi} $ is an element of $ \min(\modelsOf{\negOf{\alpha}},\leq_{\Psi}) $.
	Towards a contradiction suppose $ \omega\in\modelsOf{\negOf{\alpha}}\setminus\min(\modelsOf{\negOf{\alpha}},\leq_{\Psi}) $.
	Then there exists $ \omega'\in \min(\modelsOf{\negOf{\alpha}},\leq_{\Psi}) $ such that $ \omega' <_{\Psi\circ\alpha} \omega $.
	By \eqref{pstl:DR9} we can conclude that $ \omega' <_{\Psi\circ\alpha} \omega $, which is a contradiction to the assumed faithfulness of the assignment.
	\qed
\end{proof}

\setcounter{proposition}{4}
\begin{proposition}
	Let $ \circ $ be a \hesitant\ belief change operator. If there exists an decreasing assignment $ \Psi\mapsto\leq_{\Psi} $ with respect to $ \circ $, then 
	we have:
	\begin{equation*}
	\modelsOf{\Psi}  \subseteq \modelsOf{\Psi\circ\alpha} \subseteq \modelsOf{\Psi} \cup \min(\modelsOf{\negOf{\alpha}},\leq_{\Psi}) 
	\tag{partial success} 
	\end{equation*}
\end{proposition}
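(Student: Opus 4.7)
The plan is to establish the two inclusions separately. The right inclusion $\modelsOf{\Psi\circ\alpha} \subseteq \modelsOf{\Psi} \cup \min(\modelsOf{\negOf{\alpha}}, \leq_{\Psi})$ is an immediate consequence of Lemma \ref{lem:dr_part_succ}: by Definition \ref{def:decreasing_assignment}, a decreasing assignment satisfies \eqref{pstl:DR8}, \eqref{pstl:DR9} and \eqref{pstl:DR11}, which are exactly the hypotheses of that lemma.

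For the left inclusion $\modelsOf{\Psi} \subseteq \modelsOf{\Psi\circ\alpha}$, I would fix $\omega \in \modelsOf{\Psi}$ and aim to show that $\omega$ is a minimum of $\leq_{\Psi\circ\alpha}$; then \eqref{pstl:SFA1} and \eqref{pstl:SFA2} applied at the new state $\Psi\circ\alpha$ yield $\omega \in \modelsOf{\Psi\circ\alpha}$ directly. Minimality amounts to ruling out $\omega' <_{\Psi\circ\alpha} \omega$ for every $\omega' \in \Omega$, so I would split into four subcases according to whether each of $\omega$ and $\omega'$ is a model of $\alpha$ or of $\negOf{\alpha}$, starting from the baseline fact $\omega \leq_\Psi \omega'$ (which holds because $\omega \in \modelsOf{\Psi} = \min(\Omega, \leq_\Psi)$ by faithfulness at $\Psi$).

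The two subcases where $\omega$ and $\omega'$ lie on the same side of $\alpha$ are handled by \eqref{pstl:DR8} and \eqref{pstl:DR9} respectively, which preserve the restriction of the preorder to $\modelsOf{\alpha}$ and to $\modelsOf{\negOf{\alpha}}$, and so transport $\omega \leq_\Psi \omega'$ directly. The cross-subcase $\omega \in \modelsOf{\negOf{\alpha}}$, $\omega' \in \modelsOf{\alpha}$ is handled by \eqref{pstl:DR10}, which converts $\omega \leq_\Psi \omega'$ into $\omega \leq_{\Psi\circ\alpha} \omega'$. The remaining cross-subcase, $\omega \in \modelsOf{\alpha}$ and $\omega' \in \modelsOf{\negOf{\alpha}}$, is where \eqref{pstl:DR13} becomes essential: because $\omega \in \modelsOf{\Psi}$ is globally $\leq_\Psi$-minimal, the premise ``$\omega \leq_\Psi \omega_3$ for all $\omega_3$'' of \eqref{pstl:DR13} is satisfied, and it delivers $\omega \leq_{\Psi\circ\alpha} \omega'$.

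The main obstacle I anticipate is precisely this last cross-subcase, since the other axioms \eqref{pstl:DR10} and \eqref{pstl:DR11} only transfer inequalities in the direction that pushes counter-models of $\alpha$ downwards relative to models of $\alpha$, and would on their own allow a model of $\alpha$ to be lifted strictly above some counter-model. The purpose of \eqref{pstl:DR13} is exactly to block this when the $\alpha$-model in question is already a plausible world of $\Psi$. Once all four subcases are settled, $\omega$ is minimal in $\leq_{\Psi\circ\alpha}$ and the left inclusion follows.
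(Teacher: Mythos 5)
Your proof is correct and takes essentially the same route as the paper: the right inclusion is exactly Lemma \ref{lem:dr_part_succ}, and the left inclusion is the paper's intended ``direct consequence of \eqref{pstl:DR13}'', which you correctly unpack via the four-way case split on $\alpha$-membership using \eqref{pstl:DR8}, \eqref{pstl:DR9}, \eqref{pstl:DR10} and \eqref{pstl:DR13}. The only unstated step is that global $\leq_{\Psi\circ\alpha}$-minimality of $\omega$ yields $\omega\in\modelsOf{\Psi\circ\alpha}$ only because $\modelsOf{\Psi\circ\alpha}$ is nonempty, a consistency assumption the paper itself relies on implicitly.
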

\begin{proof}
	This is a direct consequence of Lemma \ref{lem:dr_part_succ} and \eqref{pstl:DR13}.\qed
\end{proof}

\begin{lemma}\label{lem:order_formula}
	Let $ \circ $ be a belief change operator, $ \Psi\mapsto\leq_{\Psi} $ a strong faithful assignment with respect to $ \circ $ and $ \gamma \prec_\Psi^\circ \beta $. Then $ \gamma \llcurly_\Psi^\circ \beta $ if and only if for each $ \omega_1\in\min(\modelsOf{\negOf{\beta}},\leq_{\Psi}) $ and $ \omega_2\in\min(\modelsOf{\negOf{\gamma}},\leq_{\Psi}) $ we have either $ \omega_2 \ll_{\Psi} \omega_1 $ or $ \omega_2 \simeq_\Psi \omega_1 $.
\end{lemma}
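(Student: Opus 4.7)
The plan is to translate the syntactic relation $\preceq_\Psi^\circ$ into a statement about layers of $\leq_\Psi$, so that $\gamma\llcurly_\Psi^\circ\beta$ becomes a claim about consecutive layers; the main technical device is to isolate any single world by (the negation of) its complete conjunction. Concretely, I would invoke Theorem~\ref{thm:weak_disimprovement} together with \eqref{pstl:D5} to obtain $\modelsOf{\Psi\bullet\alpha} = \modelsOf{\Psi}\cup\min(\modelsOf{\negOf{\alpha}},\leq_\Psi)$ for every formula~$\alpha$, and since $\leq_\Psi$ is a total preorder, $\min(\modelsOf{\negOf{\alpha}},\leq_\Psi)$ lies in a single $\simeq_\Psi$-equivalence class, whose layer I denote $L(\alpha)$. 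Unfolding Definition~\ref{def:giveuprelation} semantically, the inclusion $\beliefsOf{\Psi\bullet\gamma\beta}\subseteq\beliefsOf{\Psi\bullet\gamma}$ becomes $\min(\modelsOf{\negOf{\gamma}},\leq_\Psi)\subseteq\modelsOf{\Psi}\cup\min(\modelsOf{\negOf{\gamma}}\cup\modelsOf{\negOf{\beta}},\leq_\Psi)$, which after a short case analysis (splitting on whether $\Psi\models\gamma$) reduces to $L(\gamma)\leq L(\beta)$. Thus $\gamma\prec_\Psi^\circ\beta\iff L(\gamma)<L(\beta)$, and so $\gamma\llcurly_\Psi^\circ\beta$ amounts to: $L(\gamma)<L(\beta)$ and no formula $\delta$ satisfies $L(\gamma)<L(\delta)<L(\beta)$.

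For the forward direction I would fix arbitrary $\omega_2\in\min(\modelsOf{\negOf{\gamma}},\leq_\Psi)$ and $\omega_1\in\min(\modelsOf{\negOf{\beta}},\leq_\Psi)$, so that $\omega_2<_\Psi\omega_1$ by the previous step. If some world $\omega_3$ satisfied $\omega_2<_\Psi\omega_3<_\Psi\omega_1$, choosing $\delta$ to be the negation of the complete conjunction of $\omega_3$ gives $\min(\modelsOf{\negOf{\delta}},\leq_\Psi)=\{\omega_3\}$, so $L(\delta)$ sits strictly between $L(\gamma)$ and $L(\beta)$, whence $\gamma\prec_\Psi^\circ\delta\prec_\Psi^\circ\beta$, contradicting $\gamma\llcurly_\Psi^\circ\beta$; hence $\omega_2\ll_\Psi\omega_1$, which is one of the two disjuncts in the conclusion (the alternative $\omega_2\simeq_\Psi\omega_1$ is vacuous under the standing assumption $\gamma\prec_\Psi^\circ\beta$, but including it keeps the statement parallel to the definition of $\ll$).

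For the reverse direction, suppose for each such $\omega_1,\omega_2$ we have $\omega_2\ll_\Psi\omega_1$ or $\omega_2\simeq_\Psi\omega_1$. Since $\gamma\prec_\Psi^\circ\beta$ forces $\omega_2<_\Psi\omega_1$, the $\simeq$ case is excluded and $\omega_2\ll_\Psi\omega_1$. Any $\delta$ with $\gamma\prec_\Psi^\circ\delta\prec_\Psi^\circ\beta$ would, by the first step, put $\min(\modelsOf{\negOf{\delta}},\leq_\Psi)$ in a layer strictly between $L(\gamma)$ and $L(\beta)$, producing a world $\omega_3$ with $\omega_2<_\Psi\omega_3<_\Psi\omega_1$, contradicting $\omega_2\ll_\Psi\omega_1$. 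Hence no such $\delta$ exists and $\gamma\llcurly_\Psi^\circ\beta$. The main obstacle is the clean semantic translation of $\preceq_\Psi^\circ$ to the layer inequality $L(\gamma)\leq L(\beta)$; once this is in place, the single-world-as-formula trick, which is available because $\Omega$ is identified with complete conjunctions over $\Sigma$, immediately reduces the combinatorics to a single application of $\ll_\Psi$.
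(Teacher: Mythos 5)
Your proposal is correct and follows essentially the same route as the paper's proof: both reduce $\preceq_\Psi^\circ$ to a comparison of the layers of $\min(\modelsOf{\negOf{\gamma}},\leq_{\Psi})$ and $\min(\modelsOf{\negOf{\beta}},\leq_{\Psi})$ via the semantic characterisation \eqref{pstl:SFAdis}, and both witness or refute an intermediate $\prec_\Psi^\circ$-step by a formula whose negation isolates a single world $\omega_3$ lying strictly between the two layers (the paper uses $\beta\negOf{\omega_3}$ where you use $\negOf{\omega_3}$, which is immaterial). Your explicit layer function $L$ and the observation that the $\omega_2\simeq_\Psi\omega_1$ disjunct is vacuous under the standing hypothesis $\gamma\prec_\Psi^\circ\beta$ are presentational sharpenings of the paper's case split, not a different argument.
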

\begin{proof}
	The "only if" direction.
	By definition of $ \preceq_\Psi^\circ $ we have 
	\begin{equation*}
	\min(\modelsOf{\negOf{\gamma}},\leq_{\Psi}) \subseteq \min(\modelsOf{\negOf{\gamma}\lor\negOf{\beta}},\leq_{\Psi}),
	\end{equation*} which implies $ \min(\modelsOf{\negOf{\gamma}},\leq_{\Psi}) \subseteq \min(\modelsOf{\negOf{\beta}},\leq_{\Psi}) $. Clearly, it follows that $ \omega_2 \leq_{\Psi} \omega_1 $.
	In the case of $ \omega_2 \simeq_\Psi \omega_1 $ we are done. 
	
	For the remaining case of $ \omega_2 <_\Psi \omega_1 $ suppose there exists $ \omega_3\notin\{\omega_1,\omega_2\} $ such that $ \omega_2 <_\Psi \omega_3 <_\Psi \omega_1 $.
	This implies that $ \min(\modelsOf{\negOf{\gamma}},\leq_{\Psi}) \subseteq \min(\modelsOf{\negOf{\gamma}\lor\omega_3},\leq_{\Psi}) $ and $ \omega_3\notin \min(\modelsOf{\negOf{\gamma}},\leq_{\Psi}) $. 
	Thus, by definition we have $ \gamma \prec_\Psi^\circ \gamma\negOf{\omega_3} $.
	Similarly, we have $ \beta\negOf{\omega_3} \prec_\Psi^\circ \beta $ , since $ \omega_3\in \min(\modelsOf{\negOf{\beta}\lor\omega_3},\leq_{\Psi}) $ and $ \min(\modelsOf{\negOf{\beta}},\leq_{\Psi}) \not\subseteq \min(\modelsOf{\negOf{\gamma}\lor\omega_3},\leq_{\Psi}) $. Note that this implies $ \omega_3\not\models\negOf{\beta} $.
	From the previous observations we conclude  $ \min(\modelsOf{\negOf{\gamma}},\leq_{\Psi}) \subseteq \min(\modelsOf{\negOf{\gamma}\lor\negOf{\beta}\lor\omega_3},\leq_{\Psi}) $, and therefore $ \gamma \preceq_\Psi^\circ \beta\negOf{\omega_3} $.
	This leads to $ \gamma \prec_\Psi^\circ \beta\negOf{\omega_3} \prec_\Psi^\circ \beta $, which is a contradiction to $ \gamma \llcurly_\Psi^\circ \beta $.
	In summary it must be the case that either $ \omega_2 \simeq_\Psi \omega_1 $ or $ \omega_2 \ll_{\Psi} \omega_1 $.
	
	For the "if" direction suppose that $ \gamma\prec_\psi^\circ \alpha \prec_\Psi^\circ \beta $.
	This implies that
	$ \min(\modelsOf{\negOf{\gamma}},\leq_{\Psi}) \subseteq \min(\modelsOf{\negOf{\gamma} \lor \negOf{\alpha}},\leq_{\Psi})  $ and $ \min(\modelsOf{\negOf{\gamma}},\leq_{\Psi}) \not\subseteq {\min(\modelsOf{\negOf{\gamma} \lor \negOf{\alpha}},\leq_{\Psi})} $.
	Thus we have $ \omega_2 <_\Psi \omega_3 $ for every $ \omega_2\in \min(\modelsOf{\negOf{\gamma}},\leq_{\Psi}) $ and some $ \omega_3\in \min(\modelsOf{\negOf{\alpha}},\leq_{\Psi}) $.
	Additionally, we have $ \min(\modelsOf{\negOf{\alpha}},\leq_{\Psi}) \subseteq \min(\modelsOf{\negOf{\beta} \lor \negOf{\alpha}},\leq_{\Psi})  $ and $ \min(\modelsOf{\negOf{\beta}},\leq_{\Psi}) \not\subseteq \min(\modelsOf{\negOf{\beta} \lor \negOf{\alpha}},\leq_{\Psi}) $.
	Thus we have $ \omega_4 <_\Psi \omega_1 $ for every $ \omega_4\in \min(\modelsOf{\negOf{\gamma}},\leq_{\Psi}) $ and some $ \omega_1\in \min(\modelsOf{\negOf{\alpha}},\leq_{\Psi}) $.
	Note that $ \leq_{\Psi} $ is a total preorder, and thus, we have $ \omega_2 <_\Psi \omega_3 <_\Psi \omega_1 $, a contradiction to the assumptions of $ \omega_2 \ll_{\Psi} \omega_1 $ or $ \omega_2\simeq_{\Psi} \omega_1 $.
	\qed
\end{proof}

\begin{theorem}[Representation Theorem: Decrement Operators]
	Let $ \circ $  be a belief change operator. Then the following items are equivalent:
\begin{enumerate}[(a)]
	\item $ \circ $ is a \decrement\ operator 
	\item there exists a decreasing assignment $ \Psi \mapsto \leq_\Psi $ with respect to $ \circ $ that satisfies: \eqref{pstl:SFAdis}, i.e.:
	\begin{align*}
	\text{there exists } n\in\naturals_0  &\text{ such that } \modelsOf{\Psi \circ^n \alpha} = \modelsOf{\Psi} \cup \min(\modelsOf{\negOf{\alpha}}, \leq_{\Psi} ) \notag \\
	& \ksAND n \text{ is the smallest integer such that } \modelsOf{\Psi \circ^n \alpha} \not\subseteq \modelsOf{\alpha}  
	\end{align*}
\end{enumerate}
\end{theorem}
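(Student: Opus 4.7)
The plan is to prove both directions by leveraging Theorem \ref{thm:weak_disimprovement} as a baseline, reducing the work to establishing the equivalence between the iteration postulates \eqref{pstl:D8}--\eqref{pstl:D13} and the ordering postulates \eqref{pstl:DR8}--\eqref{pstl:DR13} on top of the \eqref{pstl:SFAdis} condition that both sides already enjoy.

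For the direction (a) $\Rightarrow$ (b), since a \decrement\ operator is by definition a weak \decrement\ operator, Theorem \ref{thm:weak_disimprovement} already furnishes a strong faithful assignment $\Psi \mapsto \leq_{\Psi}$ with respect to $\circ$ satisfying \eqref{pstl:SFAdis}; it remains to show this assignment satisfies \eqref{pstl:DR8}--\eqref{pstl:DR13}. My strategy for \eqref{pstl:DR8} is to fix $\omega_1,\omega_2 \in \modelsOf{\alpha}$ and instantiate \eqref{pstl:D8} with $\beta = \negOf{(\omega_1 \lor \omega_2)}$ (the hypothesis $\negOf{\alpha}\models\beta$ holds because $\modelsOf{\negOf{\beta}} = \{\omega_1,\omega_2\} \subseteq \modelsOf{\alpha}$), then read off the ordering between $\omega_1$ and $\omega_2$ from the resulting belief sets using Lemma \ref{lem:contract_world}. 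Postulate \eqref{pstl:DR9} is extracted symmetrically from \eqref{pstl:D9} by choosing $\omega_1,\omega_2 \in \modelsOf{\negOf{\alpha}}$. For \eqref{pstl:DR10} and \eqref{pstl:DR11} I instantiate \eqref{pstl:D10} and \eqref{pstl:D11} with suitable single-world-exclusion formulas that isolate a cross-pair $\omega_1 \in \modelsOf{\negOf{\alpha}}$, $\omega_2 \in \modelsOf{\alpha}$. For \eqref{pstl:DR12}, the crucial step is to use Lemma \ref{lem:order_formula} to lift the hypothesis $\omega_2 \ll_{\Psi} \omega_1$ to a formula-level statement $\gamma \llcurly_\Psi^\circ \beta$, at which point \eqref{pstl:D12} applies. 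Finally, \eqref{pstl:DR13} follows from \eqref{pstl:D13} by unfolding the inclusion $\modelsOf{\Psi} \subseteq \modelsOf{\Psi\circ\alpha}$ and using Lemma \ref{lem:contract_world} to witness the relevant worlds.

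For the direction (b) $\Rightarrow$ (a), Theorem \ref{thm:weak_disimprovement} already makes $\circ$ a weak \decrement\ operator, so only \eqref{pstl:D8}--\eqref{pstl:D13} remain. Postulates \eqref{pstl:D8} and \eqref{pstl:D9} follow by unpacking the belief sets via \eqref{pstl:SFAdis}: the hypothesis $\negOf{\alpha}\models\beta$ (respectively $\alpha\models\beta$) places $\modelsOf{\negOf{\beta}}$ entirely inside $\modelsOf{\alpha}$ (respectively $\modelsOf{\negOf{\alpha}}$), so \eqref{pstl:DR8} (respectively \eqref{pstl:DR9}) makes the relevant minima coincide. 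Postulates \eqref{pstl:D10} and \eqref{pstl:D11} follow analogously from \eqref{pstl:DR10} and \eqref{pstl:DR11}. For \eqref{pstl:D12} I start from $\gamma \llcurly_\Psi^\circ \beta$ and invoke Lemma \ref{lem:order_formula} to conclude that each pair $\omega_1 \in \min(\modelsOf{\negOf{\beta}},\leq_{\Psi})$ and $\omega_2 \in \min(\modelsOf{\negOf{\gamma}},\leq_{\Psi})$ satisfies either $\omega_2 \ll_{\Psi} \omega_1$ (which triggers \eqref{pstl:DR12} to deliver $\omega_1 \leq_{\Psi\circ\alpha} \omega_2$) or $\omega_2 \simeq_{\Psi} \omega_1$ (which triggers \eqref{pstl:DR10} for the same conclusion), yielding $\beta \preceq_{\Psi\circ\alpha}^\circ \gamma$. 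Lastly, \eqref{pstl:D13} is an immediate consequence of Proposition \ref{prop:decreasingassign_part_succ}, which itself is underwritten by \eqref{pstl:DR13}.

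The main obstacle will be the back-and-forth between the formula-level relation $\preceq_\Psi^\circ$ (with its strict and direct-successor variants) and the world-level preorder $\leq_{\Psi}$, where Lemma \ref{lem:order_formula} is the indispensable bridge for handling \eqref{pstl:D12} $\Leftrightarrow$ \eqref{pstl:DR12}. Matching the polarity of implications carefully in that case, and correctly choosing probe formulas of the form $\negOf{(\omega_1 \lor \omega_2)}$ to extract pointwise order information from belief sets, are the most delicate parts of the argument; the remaining postulates largely reduce to routine semantic rewriting via \eqref{pstl:SFAdis}.
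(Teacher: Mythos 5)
Your proposal is correct and follows essentially the same route as the paper's own proof: both directions are reduced to Theorem \ref{thm:weak_disimprovement}, the ordering postulates are extracted from \eqref{pstl:D8}--\eqref{pstl:D13} via probe formulas $\negOf{(\omega_1\lor\omega_2)}$ and the canonical assignment, Lemma \ref{lem:order_formula} serves as the bridge for \eqref{pstl:D12}$\,\Leftrightarrow\,$\eqref{pstl:DR12}, and the converse direction is handled by semantic rewriting through \eqref{pstl:SFAdis}. The only cosmetic deviation is your appeal to Lemma \ref{lem:contract_world} when reading off the order between two worlds, where the paper instead uses the definition of the canonical assignment together with \eqref{pstl:SFAdis}; the substance is unchanged.
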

\begin{proof}
	\noindent (a) to (b)-direction: As $ \circ $ is an \hesitant\ contraction operator, the corresponding operator $ \bullet $ is defined. We define the total preorder  $ \leq_\Psi $ as follows:
	\begin{equation*}
	\omega_1 \leq_\Psi \omega_2 \text{ iff } \omega_1\in\modelsOf{\Psi\bullet \negOf{(\omega_1\lor\omega_2)}}
	\end{equation*}
	By Theorem \ref{thm:weak_disimprovement} (and its proof) $ \leq_{\Psi} $ is a strong faithful assignment with respect to $ \circ $ which satisfies \eqref{pstl:SFAdis}. We show the satisfaction of \eqref{pstl:DR8} to \eqref{pstl:DR12}.
	\begin{description}
		\item[\eqref{pstl:DR8}] Let $ \omega_1,\omega_2\in\modelsOf{\alpha} $. Choose $ \beta=\negOf{(\omega_1\lor\omega_2)} $ and therefore $ \negOf{\beta}\models\alpha $.
By \eqref{pstl:D8} we have $ \beliefsOf{\Psi\change\alpha\change\beta}=_\alpha\beliefsOf{\Psi\change\beta} $, which implies:
	\begin{equation}
	\modelsOf{\Psi\change\beta} =_\alpha \modelsOf{\Psi\change\alpha\bullet\beta} \label{eq:ual:DR8}
	\end{equation}
	From \eqref{pstl:SFAdis} we obtain
	\begin{align}
	\modelsOf{\Psi\change\alpha\bullet\beta} & =\modelsOf{\Psi\change\alpha}\cup\min(\modelsOf{\negOf{\beta}},\leq_{\Psi\change\alpha})                                  \label{eq:DR8:2} 
	\end{align} and
	\begin{equation}
	\modelsOf{\Psi\bullet\beta}=\modelsOf{\Psi}\cup\min(\modelsOf{\negOf{\beta}},\leq_{\Psi}) . \label{eq:DR8:3}
	\end{equation}
	Substituting \eqref{eq:DR8:2} and \eqref{eq:DR8:3}  into Equation \eqref{eq:ual:DR8} leads to 
	\begin{equation*}
	\modelsOf{\Psi\change\alpha}\cup\min(\modelsOf{\negOf{\beta}},\leq_{\Psi\change\alpha}) =_\alpha 
	\modelsOf{\Psi}\cup\min(\modelsOf{\negOf{\beta}},\leq_{\Psi}) .
	\end{equation*}
	Now consider two cases:
	\begin{itemize}
		\item Suppose $ \modelsOf{\Psi\change\alpha}\cap\modelsOf{\negOf{\beta}}=\emptyset $. Due to the faithfulness of the assignment, we conclude $ \min(\modelsOf{\negOf{\beta}},\leq_{\Psi\change\alpha}) =_{\negOf{\beta}} 
		\modelsOf{\Psi}\cup\min(\modelsOf{\negOf{\beta}},\leq_{\Psi}) $.
		\item For $ \modelsOf{\Psi\change\alpha}\cap\modelsOf{\negOf{\beta}}\neq\emptyset $, from the faithfulness of the assignment we get $ \modelsOf{\Psi\change\alpha}\cap\modelsOf{\negOf{\beta}}=\min(\modelsOf{\negOf{\beta}},\leq_{\Psi\change\alpha}) $.
		Then again, we conclude $ \min(\modelsOf{\negOf{\beta}},\leq_{\Psi\change\alpha}) =_{\negOf{\beta}} 
		\modelsOf{\Psi}\cup\min(\modelsOf{\negOf{\beta}},\leq_{\Psi}) $.
	\end{itemize}
	In particular, we can conclude from both cases that:
	\begin{equation}
	\min(\modelsOf{\negOf{\beta}},\leq_{\Psi\change\alpha}) = \min(\modelsOf{\negOf{\beta}},\leq_{\Psi}) \label{eq:DR8:4}
	\end{equation}
	Note that $ \modelsOf{\negOf{\beta}} $ has only two elements, $ \modelsOf{\negOf{\beta}}=\{\omega_1,\omega_2\} \subseteq \modelsOf{\alpha} $, and thus information about the minima provides us the relative order of the two elements $ \omega_1$ and $\omega_2 $. So, from Equation \eqref{eq:DR8:4}, we can conclude that $ \omega_1 \leq_\Psi \omega_2 $ if and only if $ \omega_1 \leq_{\Psi\change\alpha} \omega_2 $.
		\item[\eqref{pstl:DR9}] 
		Suppose $ \omega_1,\omega_2\in\modelsOf{\negOf{\alpha}} $. We choose $ \beta=\negOf{(\omega_1\lor\omega_2)} $ and therefore, we have $ \negOf{\beta}\models\negOf{\alpha} $.
			By \eqref{pstl:D9} we have $ \beliefsOf{\Psi\change\alpha\bullet\beta}=_\negOf{\beta}\beliefsOf{\Psi\bullet\beta} $, which implies:
			\begin{equation}
			\modelsOf{\Psi\change\beta} =_\negOf{\beta} \modelsOf{\Psi\change\alpha\bullet\beta} \label{eq:ual:DR9:1}
			\end{equation}
			From \eqref{pstl:SFAdis} we obtain
			\begin{align}
			\modelsOf{\Psi\change\alpha\bullet\beta} & =\modelsOf{\Psi\change\alpha}\cup\min(\modelsOf{\negOf{\beta}},\leq_{\Psi\change\alpha})                                  \label{eq:DR9:2} 
			\end{align} and
			\begin{equation}
			\modelsOf{\Psi\bullet\beta}=\modelsOf{\Psi}\cup\min(\modelsOf{\negOf{\beta}},\leq_{\Psi}) . \label{eq:DR9:3}
			\end{equation}
			Substituting \eqref{eq:DR9:2} and \eqref{eq:DR9:3}  into Equation \eqref{eq:ual:DR9:1} leads to 
			\begin{equation}
			\modelsOf{\Psi\change\alpha}\cup\min(\modelsOf{\negOf{\beta}},\leq_{\Psi\change\alpha}) =_\negOf{\beta} 
			\modelsOf{\Psi}\cup\min(\modelsOf{\negOf{\beta}},\leq_{\Psi}) . \label{eq:ual:DR9:4}
			\end{equation}			
			Now consider two cases:
			\begin{itemize}
				\item Suppose $ \modelsOf{\Psi\change\alpha}\cap\modelsOf{\negOf{\beta}}=\emptyset $. 
				Due to the faithfulness of the assignment, we conclude $ \min(\modelsOf{\negOf{\beta}},\leq_{\Psi\change\alpha}) =_{\negOf{\beta}} 
				\modelsOf{\Psi}\cup\min(\modelsOf{\negOf{\beta}},\leq_{\Psi}) $.
				\item For $ \modelsOf{\Psi\change\alpha}\cap\modelsOf{\negOf{\beta}}\neq\emptyset $, from the faithfulness of the assignment we get $ \modelsOf{\Psi\change\alpha}\cap\modelsOf{\negOf{\beta}}=\min(\modelsOf{\negOf{\beta}},\leq_{\Psi\change\alpha}) $.
				Then again, we conclude $ \min(\modelsOf{\negOf{\beta}},\leq_{\Psi\change\alpha}) =_{\negOf{\beta}} 
				\modelsOf{\Psi}\cup\min(\modelsOf{\negOf{\beta}},\leq_{\Psi}) $.
			\end{itemize}
			
			In both cases we can conclude:
			\begin{equation}
			\min(\modelsOf{\negOf{\beta}},\leq_{\Psi\change\alpha}) = \min(\modelsOf{\negOf{\beta}},\leq_{\Psi}) \label{eq:ual:CR9:7}
			\end{equation}
			Note that $ \modelsOf{\negOf{\beta}} $ has only two elements, $ \modelsOf{\negOf{\beta}}=\{\omega_1,\omega_2\} \subseteq \modelsOf{\negOf{\alpha}} $, and thus information about the minima provides us the relative order of the two elements $ \omega_1$ and $\omega_2 $. So, from Equation \eqref{eq:ual:CR9:7}, we can conclude that $ \omega_1 \leq_\Psi \omega_2 $ if and only if $ \omega_1 \leq_{\Psi\change\alpha} \omega_2 $.
		\item[\eqref{pstl:DR10}] 
		First, observe that the proof of satisfaction of
		\eqref{pstl:DR8},	\eqref{pstl:DR9}, \eqref{pstl:DR11} and \eqref{pstl:DR13} are independent from showing \eqref{pstl:DR10}, and hence we can safely assume their satisfaction.
		By Lemma \ref{lem:dr_part_succ}, $ \circ $ fulfils \eqref{pstl:SFApartSuccess}, i.e.:
		\begin{equation*}
		\modelsOf{\Psi} \subseteq \modelsOf{\Psi\circ\alpha} \subseteq \modelsOf{\Psi} \cup \min(\modelsOf{\negOf{\alpha}},\leq_{\Psi}) 
		\end{equation*}
		Let $ \omega_1\in\modelsOf{\negOf{\alpha}} $ and $ \omega_2\in\modelsOf{\alpha} $ and $ \omega_2 <_{\Psi\circ\alpha} \omega_1 $ and $ \beta=\negOf{(\omega_1\lor\omega_2)} $. Then $ \omega_2\in \modelsOf{\Psi\circ\alpha\bullet\beta} $ and $ \omega_1\notin \modelsOf{\Psi\circ\alpha\bullet\beta} $. 
		We show that $ \omega_2 <_\Psi \omega_1 $. This is the case if $ \omega_1\not\in\modelsOf{\Psi\bullet\beta} $ and $ \omega_2\in\modelsOf{\Psi\bullet\beta} $.
		By \eqref{pstl:SFApartSuccess} $ \omega_1$ is not an element of $\modelsOf{\Psi} $. Since $ \modelsOf{\Psi\bullet\beta}=\modelsOf{\Psi} \cup \min(\modelsOf{\negOf{\beta}},\leq_{\Psi}) $, it remains to show that $ \min(\modelsOf{\negOf{\beta}},\leq_{\Psi})=\{\omega_2\}  $.
		We have two cases:
		\begin{enumerate}[1.]
			\item For $ \min(\modelsOf{\negOf{\beta}},\leq_\Psi)=\{\omega_2\} $ we conclude directly $ \omega_2 <_\Psi \omega_1 $.
			\item Now consider the case of $ \omega_1\in\min(\modelsOf{\negOf{\beta}},\leq_\Psi) $, and therefore $ \omega_1\in\modelsOf{\Psi\bullet\beta} $. 
			Let $ \gamma=\gamma'\lor\alpha $, where $ \gamma' $ is a formula such that $ \modelsOf{\gamma}=\modelsOf{\Psi\circ\alpha\bullet\beta} $. Observe now that $ 
			\omega_1\not\models\gamma $ and that we have chosen $ \gamma $ and $ \beta $ such that $ \alpha\models\gamma $.
			From $ \Psi\circ\alpha\bullet\beta\models\gamma $ we conclude $ \Psi\bullet\beta\models\gamma $ by \eqref{pstl:D10}, a contradiction to $ \omega_1\in\modelsOf{\Psi\bullet\beta} $.
		\end{enumerate}
		In summary, it must be the case that $ \omega_2 <_\Psi \omega_1 $ and thus, we have shown the satisfaction of \eqref{pstl:DR10}.
		\item[\eqref{pstl:DR11}] 
			Suppose $ \omega_1\in\modelsOf{\negOf{\alpha}} $, $ \omega_2\in\modelsOf{{\alpha}} $ and $ \omega_1 <_{\Psi}\omega_2 $. We want to show $ \omega_1 <_{\Psi\change\alpha} \omega_2 $.
		For this purpose let $ \beta=\negOf{(\omega_1\lor\omega_2)} $. 
		Since $ \Psi\mapsto\leq_{\Psi} $ is a faithful assignment it must be the case that $ \omega_2\notin \modelsOf{\Psi} $.
		By use of \eqref{pstl:SFAdis} we can conclude that $ \omega_2\notin\modelsOf{\Psi\bullet\beta} $ and $ \omega_1\in\modelsOf{\Psi\bullet\beta} $.
		Now let $ \gamma=\gamma'\lor\negOf{\alpha} $, where $ \gamma' $ is a formula such that $ \modelsOf{\Psi\bullet\beta}\cup\{ \omega_1 \}= \modelsOf{\gamma} $.
			Note that $ \negOf{\alpha}\models\gamma $ and $ \omega_2\not\models\gamma $.
			By using
			\eqref{pstl:D11} we conclude $ \Psi\change\alpha\bullet\beta\models \gamma $.
			This implies that $ \omega_2\notin \modelsOf{\Psi\change\alpha\bullet\beta} $. 
			Note that by $ \modelsOf{\negOf{\beta}}=\{\omega_1,\omega_2\} $ and \eqref{pstl:SFAdis} it must be the case that $ \omega_1\in \modelsOf{\Psi\change\alpha\bullet\beta} $ or $ \omega_2\in \modelsOf{\Psi\change\alpha\bullet\beta} $,
			leaving the only option $ \omega_1\in \modelsOf{\Psi\change\alpha\bullet\beta} $.
			In summary, we get $ \omega_1 <_{\Psi\change\alpha} \omega_2 $.
		\item[\eqref{pstl:DR12}]
Let $ \omega_2\ll_{\Psi}\omega_1 $ with $ \omega_2\models\alpha  $ and $ \omega_1\models\negOf{\alpha} $.
This means $ \omega_2<_\Psi \omega_1 $ and there exists no $ \omega_3 $ such that $ \omega_2 <_\Psi \omega_3 <_\Psi \omega_1 $.
To show that $ \omega_1\leq_{\Psi\circ\alpha} \omega_2 $, let $ \gamma=\negOf{\omega_2}\lor\negOf{\alpha} $ and $ \beta=\negOf{\omega_1}\lor\alpha. $.
Then, we have $ \negOf{\alpha}\models\gamma $ and $ \alpha\models\beta $, and
\begin{align*}
\min(\modelsOf{\omega_2\alpha},\leq_\Psi)=\min(\modelsOf{\omega_2},\leq_\Psi) & \subseteq \min(\modelsOf{\omega_2\lor\omega_1},\leq_\Psi) \\
\min(\modelsOf{\omega_1\negOf{\alpha}},\leq_\Psi)=\min(\modelsOf{\omega_1},\leq_\Psi) & \not\subseteq \min(\modelsOf{\omega_2\lor\omega_1},\leq_\Psi).
\end{align*}
Clearly, this is equivalent to $ \omega_2 \prec_\Psi \omega_1 $.
Moreover, by Lemma \ref{lem:order_formula} we have $ \omega_2 \llcurly_\Psi \omega_1 $, and thus by \eqref{pstl:D12}, we have $ \omega_1 \preceq_{\Psi\circ\alpha} \omega_2 $.
By definition we have
\begin{align*}
\min(\modelsOf{\omega_1},\leq_\Psi) & \subseteq \min(\modelsOf{\omega_2\lor\omega_1},\leq_\Psi)
\end{align*} 
from which it is easy to conclude that $ \omega_1 \leq_{\Psi\circ\alpha} \omega_2 $.
	\item[\eqref{pstl:DR13}]
	Let $ \omega_1\modelsOf{\negOf{\alpha}} $ and $ \omega_2\in\modelsOf{\alpha} $ such that $ \omega_2\leq_{\Psi}\omega_3  $ for all $ \omega_3 $.
	Then $ \omega_2\in\modelsOf{\Psi} $ and thus $ \omega_2\in\modelsOf{\Psi\circ\alpha} $ by \eqref{pstl:D13}.
	Clearly, then we have $ \omega_2\leq_{\Psi\circ\alpha} \omega_1 $.
	\end{description}
	
	\noindent (b) to (a)-direction: Suppose that $ \Psi\mapsto\leq_{\Psi} $ is a decreasing assignment with respect to $ \circ $. By Theorem \ref{thm:weak_disimprovement}, the belief change operator $ \circ $ fulfils \eqref{pstl:D1} -- \eqref{pstl:D7}. We show the satisfaction of
	\eqref{pstl:D8} to \eqref{pstl:D13}.
	\begin{description}	
		\item[\eqref{pstl:D8}] 
		Let $ \negOf{\beta}\models\alpha $. By \eqref{pstl:SFAdis} we have to show $ \modelsOf{\Psi} \cup {\min(\modelsOf{\negOf{\beta}},\leq_{\Psi})} =_\alpha \modelsOf{\Psi\circ\alpha} \cup \min(\modelsOf{\negOf{\beta}},\leq_{\Psi\circ\alpha}) $.
		Then, by assumption, Lemma \ref{lem:dr_part_succ} and \eqref{pstl:DR8} and \eqref{pstl:DR13}, it is easy to see that $ \min(\modelsOf{\negOf{\beta}},\leq_{\Psi}) = \min(\modelsOf{\negOf{\beta}},\leq_{\Psi\circ\alpha}) $.
		\item[\eqref{pstl:D9}]
		Let $ \negOf{\beta} \models \negOf{\alpha} $. By \eqref{pstl:SFAdis} we have to show $ \modelsOf{\Psi} \cup {\min(\modelsOf{\negOf{\beta}},\leq_{\Psi})} = \modelsOf{\Psi\circ\alpha} \cup \min(\modelsOf{\negOf{\beta}},\leq_{\Psi\circ\alpha}) $.
			Then, by assumption, Lemma \ref{lem:dr_part_succ} and \eqref{pstl:DR9}, it is easy to see that $ \min(\modelsOf{\negOf{\beta}},\leq_{\Psi}) = \min(\modelsOf{\negOf{\beta}},\leq_{\Psi\circ\alpha}) $.
		\item[\eqref{pstl:D10}] 
		Let 
		$  \alpha \models \gamma $
		and $ \Psi\circ\alpha\bullet\beta\models\gamma $.
			By Proposition \eqref{prop:decreasingassign_part_succ} we conclude $ \modelsOf{\Psi} \subseteq \modelsOf{\gamma} $ and $ \min(\modelsOf{\negOf{\beta}},\leq_{\Psi\circ\alpha}) \subseteq  \modelsOf{\gamma} $.
			Remember that $ \circ $ satisfies \eqref{pstl:SFAdis} and therefore, $ \modelsOf{\Psi\bullet\beta} = \modelsOf{\Psi} \cup \min(\modelsOf{\negOf{\beta}},\leq_{\Psi}) $.
			Now let $ \omega_1\in\modelsOf{\negOf{\beta}} $ such that $ \omega_1 \notin \min(\modelsOf{\negOf{\beta}},\leq_{\Psi\circ\alpha}) $. We show that $ \omega_1\notin \min(\modelsOf{\negOf{\beta}},\leq_{\Psi}) $ or $ \omega_1\models\gamma $. Let $ \omega_2\in \min(\modelsOf{\negOf{\beta}},\leq_{\Psi\circ\alpha}) $ and thus, $ \omega_2 <_{\Psi\circ\alpha} \omega_1 $. We differentiate by case:
			\begin{enumerate}
				\item For $ \omega_1\in\modelsOf{\alpha} $ we conclude $ \omega_1\models\gamma $ directly from $ \alpha\models\gamma $.
				\item In the case of $ \omega_1\in\modelsOf{\negOf{\alpha}} $ and $ \omega_2\in\modelsOf{\alpha} $ we conclude $ \omega_2 <_{\Psi} \omega_1 $ by contraposition of \eqref{pstl:DR10}.
				\item In the remaining case of $ \omega_1,\omega_2\in\modelsOf{\negOf{\alpha}} $ it is easy to conclude by \eqref{pstl:DR9} that $ \omega_2 <_{\Psi} \omega_1 $.
			\end{enumerate}
			This shows that either $ \omega_2 <_\Psi \omega_1 $ or $ \omega_1\models\gamma $, leading to the conclusion that $ \min(\modelsOf{\negOf{\beta}},\leq_{\Psi})\subseteq \modelsOf{\gamma} $.
			In summary we have $ \modelsOf{\Psi\bullet\beta} = \modelsOf{\Psi} \cup {\min(\modelsOf{\negOf{\beta}},\leq_{\Psi})} \subseteq \modelsOf{\gamma} $.
		\item[\eqref{pstl:D11}] 
		Let
$ \negOf{\alpha}\models\gamma $
			and $ \Psi\bullet\beta\models\gamma $. 
			We want to show $ \Psi\change\alpha\bullet\beta\models\gamma $.
			By satisfaction of \eqref{pstl:SFAdis} we have
			\begin{equation}
			\modelsOf{\Psi\bullet\beta} = \modelsOf{\Psi} \cup \min( \modelsOf{\negOf{\beta}} , \leq_{\Psi} ) \subseteq \modelsOf{\gamma}
			\label{eq:proof:D11:1}
			\end{equation}
			and by Lemma \ref{lem:dr_part_succ} and \eqref{pstl:SFAdis} we have
			\begin{align}
			\modelsOf{\Psi\change\alpha\bullet\beta} & = \modelsOf{\Psi\change\alpha} \cup \min( \modelsOf{\negOf{\beta}} , \leq_{\Psi\change\alpha} ), 
			\label{eq:proof:D11:2a}\\
			\modelsOf{\Psi\change\alpha\bullet\beta} & \subseteq \modelsOf{\Psi} \cup \min( \modelsOf{\negOf{\alpha}} , \leq_{\Psi} ) \cup \min( \modelsOf{\negOf{\beta}} , \leq_{\Psi\change\alpha} )
			 .
			 \label{eq:proof:D11:2b}
			\end{align}
			We show that every $ \omega \in \modelsOf{\Psi\change\alpha\bullet\beta} $ is a model of $ \gamma $.
			\begin{itemize}
				\item If $ \omega\in\modelsOf{\Psi} $, then by Equation \eqref{eq:proof:D11:1} we have $ \omega\models\gamma $.
				\item For $ \omega \in \min( \modelsOf{\negOf{\beta}} , \leq_{\Psi\change\alpha} ) $ assume that $ \omega\models\negOf{\gamma} $.
				For $ \omega\models\negOf{\alpha} $, we directly conclude $ \omega\models\gamma $ from $ \negOf{\alpha}\models\gamma $.  Therefore we can assume $ \omega\models\alpha $.
				Since $ \min( \modelsOf{\negOf{\beta}} , \leq_{\Psi} ) \subseteq \modelsOf{\gamma} $, there must be $ \omega_1\in \min( \modelsOf{\negOf{\beta}} , \leq_{\Psi} ) $ such that $ \omega_1 <_\Psi \omega $.
				If $ \omega_1,\omega \in \modelsOf{\alpha} $, then $ \omega_1 <_{\Psi\change\alpha} \omega $ by \eqref{pstl:DR8}.
				For $ \omega_1 \in \modelsOf{\alpha} $ and $ \omega \in \modelsOf{\negOf{\alpha}} $ we conclude $ \omega_1 <_{\Psi\change\alpha} \omega $ by \eqref{pstl:DR11}.
				Thus it must be the case that $ \omega_1 <_{\Psi\change\alpha} \omega $, which is a contradiction to the minimality of $ \omega $.
				\item Suppose that $ \omega \in \min( \modelsOf{\negOf{\alpha}} , \leq_{\Psi\change\alpha} ) $.
				Then, $ \omega\models\gamma $ can be directly obtained from $ \negOf{\alpha}\models\gamma $. 
			\end{itemize}
			From Equation \eqref{eq:proof:D11:2b}  it follows that $ \omega \models \gamma $, and therefore $ \Psi\change\alpha\bullet\beta\models\gamma $.
		\item[\eqref{pstl:D12}]
		Let $ \alpha\models\beta $ and $ \negOf{\alpha}\models\gamma $, and $ \gamma \llcurly_\Psi \beta $.
		We show now that $ \beta \preceq_{\Psi\circ\alpha} \gamma $, which is the case when $ \min(\modelsOf{\negOf{\beta}},\leq_{\Psi\circ\alpha}) \subseteq \min(\modelsOf{\negOf{\beta}\lor \negOf{\gamma}},\leq_{\Psi\circ\alpha}) $.
	
		First, observe that $ \min(\modelsOf{\negOf{\gamma}},\leq_{\Psi\circ\alpha}) \subseteq \modelsOf{\alpha} $ and $ \min(\modelsOf{\negOf{\beta}},\leq_{\Psi\circ\alpha}) \subseteq \modelsOf{\negOf{\alpha}} $. 
		Thus for every $ \omega_2\in\min(\modelsOf{\negOf{\gamma}},\leq_{\Psi}) $ and every $ \omega_1 \in \min(\modelsOf{\negOf{\beta}},\leq_{\Psi}) $ we have $ 
		\omega_2\in\modelsOf{\alpha} $ and $ \omega_1\in\modelsOf{\negOf{\alpha}} $.
		Therefore, by $ \gamma \llcurly_\Psi \beta $ and Lemma \ref{lem:order_formula} we have two cases:
		\begin{itemize}
			\item In the case of $ \omega_2\llcurly_\Psi \omega_1 $ we conclude by \eqref{pstl:DR12} that $ \omega_1 \leq_{\Psi\circ\alpha} \omega_2 $.
			\item In the case of $ \omega_2\simeq_{\Psi} \omega_1 $ we have $  \omega_1 \leq_{\Psi} \omega_2 $, and hence, by \eqref{pstl:DR10}, we have $ \omega_1\leq_{\Psi\circ\alpha} \omega_2 $.
		\end{itemize}
		From \eqref{pstl:DR8} and \eqref{pstl:DR9} we get
		$	\min(\modelsOf{\negOf{\gamma}},\leq_{\Psi}) =  \min(\modelsOf{\negOf{\gamma}},\leq_{\Psi\circ\alpha}) $ and
		$	\min(\modelsOf{\negOf{\beta}},\leq_{\Psi}) =   \min(\modelsOf{\negOf{\beta}},\leq_{\Psi\circ\alpha})$.
		In summary, we have $ \min(\modelsOf{\negOf{\beta}},\leq_{\Psi\circ\alpha}) \subseteq \min(\modelsOf{\negOf{\beta}\lor\negOf{\gamma}},\leq_{\Psi\circ\alpha}) $, which is equivalent to $ \beta \preceq_{\Psi\circ\alpha} \gamma $.
		\item[\eqref{pstl:D13}]
		Let $ \omega\in \modelsOf{\Psi} $. If $ \omega\in\modelsOf{\alpha} $, then by \eqref{pstl:DR13} and \eqref{pstl:DR8} we have $ \omega\in\modelsOf{\Psi\circ\alpha} $.
		In the case of $ \omega\in\modelsOf{\negOf{\alpha}} $ we have $ \omega\in\modelsOf{\Psi\circ\alpha} $ by \eqref{pstl:DR9}  and \eqref{pstl:DR10}.
	\end{description}
In summary, $ \Psi\mapsto\leq_{\Psi} $ is a decreasing assignment.
\qed
\end{proof}

\end{document}